\DeclareRobustCommand*\cal{\@fontswitch\relax\mathcal}
\theoremstyle{plain}
\newtheorem{thm}{\protect\theoremname}
\theoremstyle{plain}
\newtheorem{cor}[thm]{\protect\corname}
\theoremstyle{plain}
\theoremstyle{plain}
\theoremstyle{plain}
\newtheorem{lem}[thm]{\protect\lemmaname}
\theoremstyle{plain}
\newtheorem{rem}{\protect\remarkname}
\theoremstyle{plain}
\providecommand{\algorithmname}{Algorithm}
\providecommand{\assumptionname}{Assumption}
\providecommand{\lemmaname}{Lemma}
\providecommand{\theoremname}{Theorem}
\providecommand{\corname}{Corollary}
\providecommand{\propositionname}{Proposition}
\providecommand{\remarkname}{Remark}
\newcommand{\cX}{\mathcal{X}}
\newcommand{\EE}{\mathbb{E}}
\newcommand{\PP}{\mathbb{P}}
\newcommand{\RR}{\mathbb{R}}
\newcommand{\indic}{{\bf 1}}
\newcommand{\poly}{\textbf{poly}}
\title{On the Suboptimality of Thompson Sampling in High Dimensions}
\author{Raymond Zhang and Richard Combes}
\begin{document}
\maketitle
\begin{abstract}
In this paper we consider Thompson Sampling (TS) for combinatorial semi-bandits. We demonstrate that, perhaps surprisingly, TS is sub-optimal for this problem in the sense that its regret scales exponentially in the ambient dimension, and its minimax regret scales almost linearly. This phenomenon occurs under a wide variety of assumptions including both non-linear and linear reward functions, with Bernoulli distributed rewards and uniform priors. We also show that including a fixed amount of forced exploration to TS does not alleviate the problem. We complement our theoretical results with numerical results and show that in practice TS indeed can perform very poorly in some high dimensional situations.
\end{abstract}
\section{Introduction}
\label{sec:introduction}

We consider the problem of combinatorial bandits with semi-bandit feedback. At time $t=1,,...,T$ a learner selects a decision $x(t) \in \cX$ where $\cX \subset \{0,1\}^d$ is the set of available decisions. The environment then draws a random vector $Z(t) \in \RR^d$. The learner then observes $Y(t) = x(t) \odot Z(t)$, where $\odot$ denotes the Hadamard (elementwise) product. This setting is called semi bandit feedback.   We assume that $(Z(t))_{t \ge 1}$ are i.i.d., and that $Z_1(t),...,Z_d(t)$ are independent and distributed as $Z_i(t) \sim $ Bernoulli$(\theta_i)$ for all $t$,$i$. Then the learner receives a reward $f(x(t),Z(t))$ where $f$ is a known function. 

The goal is to minimize the regret:
\begin{equation*}
    R(T,\theta) = T \max_{x \in \cX} \Big\{ \EE f(x,Z(t)) \Big\} - \sum_{t=1}^T \EE f(x(t),Z(t)).
\end{equation*}
Initially $\theta$ is unknown to the learner and minimizing regret involves exploring suboptimal decisions just enough in order to identify the optimal decision. For any decision $x \in \cX$, define the reward gap
\begin{equation*}
\Delta_x = \max_{x \in \cX} \{ \EE f(x,Z(t)) \} - \EE f(x,Z(t)),
\end{equation*}
which is the amount of regret incurred by choosing $x$ instead of an optimal decision 
\begin{equation*}
x^\star \in \arg \max_{x \in \cX} \Big\{ \EE f(x,Z(t)) \Big\},
\end{equation*}
and 
$
\Delta_{\min} = \min_{x \in \cX: \Delta_{x} > 0} \Delta_x
$
the minimal gap. We define $m \triangleq \max_{x \in \cX} \sum_{i=1}^d |x_i| $ the size of the maximal decision.

For this problem, an algorithm which has attracted a lot of interest is Thompson Sampling (TS), which at time $t$ selects the decision maximizing $x \mapsto f(x,V(t))  $ where $V(t)$ is a random variable distributed as the posterior distribution of $\theta$ knowing the information available at time $t$, which is $Y(1),...,Y(t-1)$. The prior distribution of $\theta$ can be chosen in various ways, the most natural being a non-informative distribution such as the uniform distribution.

TS is usually computationally simple to implement, for instance when $f$ is linear,  since it involves maximizing $f$ over $\cX$. Also, for some problem instances it tends to perform well numerically. A particular case of interest is \emph{linear combinatorial semi-bandits} where $f(x,\theta) = \theta^\top x$ so that the reward is a linear function of the decision.

{\bf Our contribution.} We show that the regret of TS in general does not scale polynomially in the ambient dimension $d$.

(i) We provide several examples, both for linear and non-linear combinatorial bandits, where the regret of TS does not scale polynomially in the dimension $d$ (in fact in some cases it may scale even faster than exponentially in the dimension). In some cases, we show that one must wait for an amount of time greater than $\Omega(d^{d})$ for TS to perform at least as well as random choice where one simply chooses $x(t)$ uniformly distributed in ${\cal X}$ at every round. Therefore, in high dimensions, in some instances, TS in general can perform strictly worse than random choice for all practically relevant time horizons.

(ii) We show that the minimax regret of TS scales at least as $\Omega(T^{1 - {1 \over d}} )$ so that it is not minimax optimal, as there exists algorithms such as CUCB and ESCB with minimax regret $O(\poly(d) \sqrt{T (\ln T)})$. In fact, in high dimensions, the minimax regret of TS is almost linear.

(iii) We further show that adding forced exploration as an initialization step to TS does not correct the minimax problem, so that this is not an artifact due to initialization.

(iv) Using numerical experiments, we show that indeed, for reasonable time horizons, TS performs very poorly in high dimensions in some instances. 

We believe that our results highlight two general characteristics of TS. First, TS tends to be much more greedy than optimistic algorithms such as ESCB and CUCB. This greedy behavior explains why the regret of TS is, in some instances, much smaller than that of optimistic algorithms. In fact it is sometimes so greedy that it misses the optimal decision. Second, TS tends to be by nature a ''risky'' algorithm so that its regret exhibits very large fluctuations across runs. In some cases it finds the optimal arm very quickly and with little to no regret, while in other cases it simply misses the optimal decision and performs worse than random choice. 

{\bf Related work.} Combinatorial bandits are a generalization of classical bandits studied in \cite{lai1985}. Several asymptotically optimal algorithms are known for classical bandits, including the algorithm of \cite{lai1987},  KL-UCB \cite{cappe2012}, DMED \cite{honda2010} and TS \cite{thompson1933,kaufmann12a}. Other algorithms include the celebrated UCB1 \cite{auer2002}. A large number of algorithms for combinatorial semi-bandits have been proposed, many of which naturally extend algorithms for classical bandits to the combinatorial setting. CUCB \cite{chen2013,kveton2014tight} is a natural extension of UCB1 to the combinatorial setting. ESCB \cite{combes2015,degenne2016} is an improvement of CUCB which leverages the independence of rewards between items. AESCB \cite{cuvelier2020} is an approximate version of ESCB with roughly the same performance guarantees and reduced computational complexity. TS for combinatorial bandits was considered in \cite{gopalan2014,wang2018,perrault2020}. Also, combinatorial semi bandits are a particular case of structured bandits, for which there exists asymptotically optimal algorithms such as OSSB \cite{combes2017}. Table~\ref{tab:perfs} presents the best known regret upper bounds for CUCB, ESCB and TS. For completeness, we also recall the complete regret upper bound for ESCB  as Theorem~\ref{th:escb} (Appendix~\ref{sec:escb}).

We provide two types of bounds for TS: problem dependent bounds (sometimes called gap-dependent bounds) and minimax bounds (or gap-free bounds). The former involves $T$, $d$, $m$ and $\Delta_{\min}$, while the latter hold for any value of $\Delta_{\min}$.

\begin{table}
    \caption{Algorithms and best known regret bounds.}
    \label{tab:perfs}
    \centering
    \begin{tabular}{c|c}                 
        Algo.&  Regret \\
             &  ((i) problem dependent and (ii) minimax) \\ \hline
          CUCB   & (i) $O( d m  (\ln T)  /  \Delta_{min})$ 
	      \cite{kveton2014tight}[Theorem 4]   \\
	         & (ii) $O( \sqrt{d m T (\ln T)} + dm)$ 
	          \cite{kveton2014tight}[Theorem 6] \\ \hline
          ESCB   & (i) $O( d (\ln m)^2 (\ln T)  /  \Delta_{min} + d m^3/\Delta_{\min}^2)$  \cite{degenne2016}[Theorem 2] \\
            & (ii) $O( \sqrt{d (\ln m)^2 T (\ln T)} + dm)$
	           \cite{degenne2016}[Corollary 1]  \\ \hline
           TS  & (i) $O( d (\ln m)^2 \ln (|\cX|T)  /  \Delta_{\min} + d  m^3/\Delta_{\min}^2 
           + m ((m^2 + 1)/\Delta_{min})^{2 + 4 m})$\\
           &\cite{perrault2020}[Theorem 1] \\
	        & (ii) not available \\ \hline	   
    \end{tabular}
    
\end{table}

An important observation is that all of the known regret upper bounds for TS \cite{gopalan2014}, \cite{wang2018}, \cite{perrault2020} feature at least one term that does not scale polynomially with the dimension.
In particular, the paper \cite{perrault2020} shows that there exists a universal constant $C \ge 0$ such that the regret of TS is upper bounded by
\begin{align*}
    R(T) & \le C \Big[d (\ln m)^2 \ln (|\cX|T)  /  \Delta_{\min} + d  m^3/\Delta_{\min}^2 
    + m ((m^2 + 1)/\Delta_{min})^{2 + 4 m}) \Big].
\end{align*}
This is a general bound for all combinatorial sets of interest. The bound has a super exponential term (that does not depend on $T$) : $m ((m^2 + 1)/\Delta_{min})^{2 + 4 m})$. Regret bounds for CUCB and ESCB do not feature this exponential dependency in the dimension. However, since TS tends to perform very well in all of the numerical experiments presented in the literature, a natural intuition would be that all known upper bounds are simply not sharp, and that the true regret of TS does not really grow exponentially with the dimension.  We show in this work that this intuition is incorrect, and that the regret of TS really does scale (at least) exponentially in the dimension i.e. it suffers from the ''curse of dimensionality''. This directly implies that, in high dimensions, and for some combinatorial sets, CUCB and ESCB perform much better than TS. 

In order to alleviate this problem, it would be natural to attempt to modify the prior distribution used by TS, since it is known to have a strong influence on its performance  \cite{Bubeck14,Liu16}. Similarly, in a related problem, \cite{Agrawal17} suggests to use correlated Thompson samples, and \cite{Grant20} studies the influence of the prior on numerical performance. We believe that this is an interesting open problem.

\begin{figure}[ht]
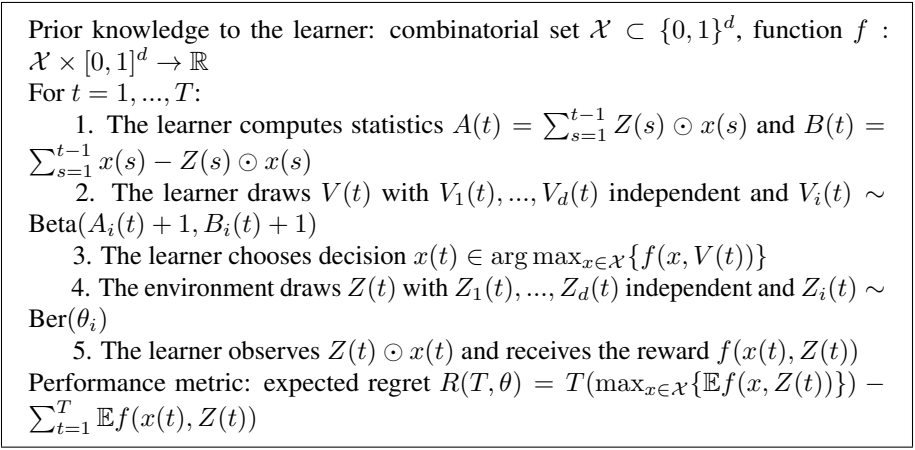

\begin{mdframed}
    Prior knowledge to the learner: combinatorial set ${\cal X} \subset \{0,1\}^d$, function $f:{\cal X} \times [0,1]^d \to \mathbb{R}$

    For $t=1,...,T$:
    
 	\hspace{0.5cm} 1. The learner computes statistics
 	$A(t) = \sum_{s=1}^{t-1} Z(s) \odot x(s)$ and $B(t) = \sum_{s=1}^{t-1} x(s) - Z(s) \odot x(s)$  

    \hspace{0.5cm} 2. The learner draws $V(t)$ with $V_1(t),...,V_d(t)$ independent and $V_i(t) \sim \text{Beta}(A_i(t)+1,B_i(t)+1)$

	\hspace{0.5cm} 3. The learner chooses decision $x(t) \in \arg\max_{x \in \mathcal{X}} \{ f(x,V(t))\}$

    \hspace{0.5cm} 4. The environment draws $Z(t)$ with $Z_1(t),...,Z_d(t)$ independent and $Z_i(t) \sim \text{Ber}(\theta_i)$

    \hspace{0.5cm} 5. The learner observes $Z(t) \odot x(t)$ and receives the reward $f(x(t),Z(t))$

    Performance metric: expected regret $R(T,\theta) = T (\max_{x \in \cX} \{ \EE f(x,Z(t)) \} ) - \sum_{t=1}^{T}  \EE f(x(t),Z(t))$

\end{mdframed}
    \caption{\label{myfig} TS for combinatorial semi-bandits with Bernoulli rewards and uniform prior.}
\end{figure}

\section{Model}\label{sec:model}

\subsection{Problem Dependent Regret and Minimax Regret}

In order to evaluate the performance of an algorithm over the set of instances $\theta \in [0,1]^d$, there are two main figures of merit that we study in this paper. The first is the problem-dependent regret which is $R(T,\theta)$ when $\theta \in [0,1]^d$ is fixed. The second is the minimax regret which is the worse case over $\theta$ for $T$ fixed:
$
\max_{\theta \in [0,1]^d} R(T,\theta).
$

\subsection{TS}

The basic TS algorithm works as follows. For $i=1,...,d$, define
$
    A_i(t) = \sum_{s=1}^{t-1} Z_i(s) x_i(s) 
$
and
$
    B_i(t) = \sum_{s=1}^{t-1} (1-Z_i(s)) x_i(s)
$
which represent the number of successes and failures observed when getting a sample to estimate $\theta_i$. We define 
$
N_i(t) = A_i(t) + B_i(t) = \sum_{s=1}^{t-1} x_i(s)
$
the number of samples available at time $t$ to estimate $\theta_i$, and 
\begin{equation*}
\hat\theta_i(t) = {A_i(t) \over \max(N_i(t),1)},
\end{equation*} 
the corresponding estimate of $\theta_i$ which is simply the empirical mean.

The TS algorithm selects decision
\begin{equation*}
    x(t) \in \arg\max_{x \in \cX} f(x,V(t)) \text{ where }  V_i(t) \sim \text{Beta}( A_i(t) + 1,B_i(t) + 1),
\end{equation*}    
and $V_1(t),...,V_d(t)$ are independent. Vector $V(t)$ is called the \emph{Thompson sample} at time $t$. TS is based on a Bayesian argument, $V(t)$ is drawn according to the posterior distribution of $\theta$ knowing the information available at time $t$, where the prior distribution for $\theta$ is uniform over $[0,1]^d$. Choosing decision $x(t)$ as done above should ensure that one explores just enough to find the optimal decision. In the linear case, the decision can be computed by linear maximization over $\cX$ :
\begin{equation*}
    x(t) \in \arg\max_{x \in \cX} \{ V(t)^\top x \}.
\end{equation*}
This explains the practical appeal of TS, since whenever linear maximization over $\cX$ can be implemented efficiently, the algorithm has low computational complexity.

\subsection{TS with Forced Exploration}

A natural extension of TS is to add $\ell$ forced exploration rounds, where $\ell$ is a fixed number, in order to avoid some artifacts that could possibly occur due to the prior distribution. The algorithm operates as follows. At time $1 \le t \le \ell$, one selects $x(t) \in \cX$ such that $x_{i(t)}(t) = 1$ with 
\begin{equation*}
i(t) \in \min_{i=1,...,d} N_i(t).
\end{equation*}
Otherwise for $t \ge \ell+1$ one selects 
\begin{equation*}
x(t) \in \arg\max_{x \in \cX} f(x,V(t)),
\end{equation*}
with $V(t)$ the Thompson sample defined above. Namely, one first performs a forced exploration during $\ell$ rounds then apply TS. This guarantees that $N_i(t) \ge \lfloor \ell/d \rfloor$ samples are available to estimate $\theta_i$ for all $i=1,...,d$, then one subsequently applies TS. We call this variant TS with $\ell$ forced exploration rounds.
\section{Main Results}\label{sec:mainresult}
We now state our main theoretical results. All proofs are found in the appendix.
\subsection{Some Combinatorial Sets of Interest} \label{subsec : combiset}
We will provide several examples of combinatorial sets where the regret of TS indeed scales exponentially with the dimension, so that this phenomenon is quite general and is not an artifact that only occurs for one particular family of combinatorial structures. We define $\cX^{p}$ the set of paths of the directed acyclic graph depicted in figure~\ref{fig:directed_graph_matching}. This set has two disjoint decisions $(1,...,1,0,...,0)$ and $(0,...,0,1,...,1)$ of equal size $m = {d \over 2}$. We define $\cX^{m}$ the set of matchings of the bipartite graph depicted in figure~\ref{fig:directed_graph_matching}. This graph has $d$ vertices and $d$ edges. 
\begin{figure}[h]
    \centering
    \includegraphics[width=0.45\linewidth]{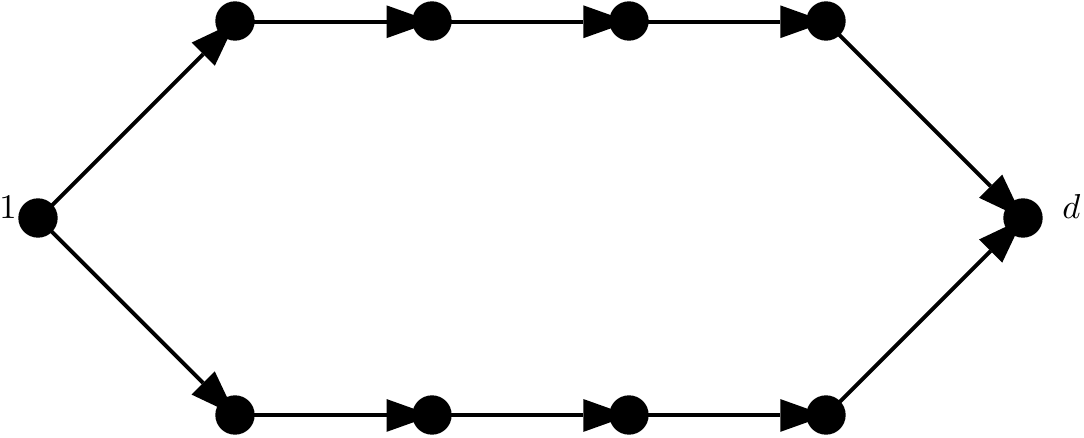} \hfill 
        \includegraphics[width=0.25\linewidth]{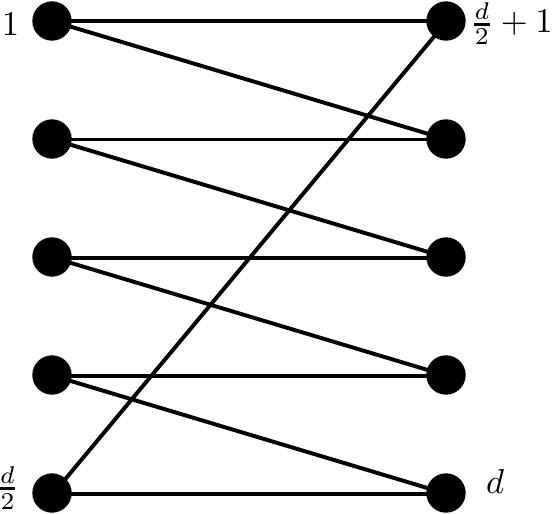}
    \caption{Paths in a directed acyclic graph (left) and matchings of the Z graph (right)}
    \label{fig:directed_graph_matching}
\end{figure}

The combinatorial sets presented are very simple. However our results can by generalised for more complex set of interest without losing the exponential nature of the regret. For example the two path environment can be generalized to $k > 2$ paths. It can also be generalized for non disjoint paths if the optimal path does not share "a lot" of edges with all the other paths. This could be the case for real life applications like shortest path routing or in medical trials where treatments cannot be associated with each other.
With those simple examples in mind many other more complex sets that exhibit exponential regret can be found. However we do not provide formal proof for those more complex examples as the simple example of paths is sufficient to prove the suboptimality of TS here.

\subsection{Linear Combinatorial Bandits}

We focus on linear bandits, where the expected reward function is linear i.e.  $f(x,\theta) = x^\top \theta$. In the example of~\cite{wang2018}[Theorem 3], the Thompson sample of sub-optimal decisions has no variance. One could be lead to think that the exponential dependency of the regret on the dimension could be caused by this feature, and it is hence natural to investigate the linear case, which is not only more common, but also where the Thompson sample of any decision always has a non-null variance.

In Theorem~\ref{th:linear_frequentist} we consider a linear problem over the combinatorial set ${\cal X}^p$ which is formed of two disjoint paths. We show that the regret of TS does scale exponentially in the dimension for this problem. Therefore this phenomenon is not linked to a particular, well chosen, non-linear reward function, but also occurs for the classical case of linear reward functions.

\begin{thm}\label{th:linear_frequentist}
    Consider a linear combinatorial bandit problem over combinatorial set $\cX^{p}$ and parameter $\theta_i = 1$ if $1 \le i \le d/2$ and $\theta_i = 1 - {\Delta \over m}$ otherwise. Assume that ${\Delta \over m}+{1\over \sqrt{m}} < {1\over2}$.
    
    Then the regret of TS is lower bounded by
\begin{equation*}
    R(T,\theta) \ge  {\Delta \over 4 p_\Delta}(1 - (1-p_{\Delta})^{T-1}), \text{     with   }
    p_\Delta = \exp\left\{- {2 m\over 9} \left( {1 \over 2} - ({\Delta \over m} + {1 \over \sqrt{m}})\right)^2   \right\} .
\end{equation*}
\end{thm}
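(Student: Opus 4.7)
The set $\cX^p = \{x^\star, x^{bad}\}$ with $x^\star=(1,\ldots,1,0,\ldots,0)$ and $x^{bad}=(0,\ldots,0,1,\ldots,1)$ consists of two \emph{disjoint} decisions of size $m=d/2$. This disjointness is the cornerstone of the proof: if TS has always selected $x^{bad}$ up to time $t-1$, then no observation $Z_i(s)$ has been collected for any $i \le m$, so the posteriors on those coordinates remain Beta$(1,1)$, i.e. uniform on $[0,1]$. Let $B_t = \{x(s)=x^{bad}\text{ for all }s\le t\}$. Since the only suboptimal arm has gap $\Delta$,
\begin{equation*}
R(T,\theta) = \Delta\sum_{t=1}^T \PP(x(t)=x^{bad}) \ge \Delta\sum_{t=1}^T \PP(B_t),
\end{equation*}
so it suffices to establish $\PP(B_1) \ge 1/2$ together with the per-step bound $\PP(B_t \mid B_{t-1}) \ge 1 - p_\Delta$; a geometric telescoping will then deliver the claim (the factor $1/4$ in the statement absorbs the initial probability and a shift of the summation index).

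The initial bound is immediate: at $t=1$ all $V_i(1)$ are i.i.d. uniform, so by exchangeability of the two half-vectors $\PP(x(1)=x^{bad})=1/2$. For $t \ge 2$, conditionally on $B_{t-1}$ the coordinates split into two independent blocks: $(V_i(t))_{i\le m}$ are i.i.d. uniform, while for $i > m$ we have $V_i(t)\sim\text{Beta}(A_i(t)+1, B_i(t)+1)$ with $A_i(t)+B_i(t)=t-1$ and $A_i(t)$ an independent Binomial$(t-1, 1-\Delta/m)$. The algorithm switches to $x^\star$ iff $S^\star(t):=\sum_{i\le m}V_i(t) > S^{bad}(t):=\sum_{i>m}V_i(t)$. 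I would introduce a threshold $a^\star$ strictly between the means $\EE[S^\star(t)]=m/2$ and $\sum_{i>m}\theta_i=m-\Delta$ and apply a union bound
\begin{equation*}
\PP(S^\star(t) > S^{bad}(t) \mid B_{t-1}) \le \PP(S^\star(t) > a^\star) + \PP(S^{bad}(t) < a^\star \mid B_{t-1}).
\end{equation*}
The first term is controlled by Hoeffding's inequality on the sum of $m$ independent uniform variables. For the second, I would split $S^{bad}(t) - (m-\Delta)$ into (a) a ``posterior sampling'' piece $S^{bad}(t)-\sum_{i>m}\EE[V_i(t)\mid A_i(t)]$ and (b) an ``empirical mean'' piece involving $\sum_{i>m}A_i(t)/(t-1)-\theta_i$, and bound each by Hoeffding (the former since $V_i(t)\in[0,1]$ is sub-Gaussian, the latter applied to the Bernoulli observations). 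Splitting the gap $m/2-\Delta$ into three roughly equal parts produces the constant $2m/9$ in the exponent of $p_\Delta$, while the $1/\sqrt m$ slack is exactly what is needed to absorb the posterior-mean fluctuation uniformly in $t\ge 2$. The hypothesis $\Delta/m+1/\sqrt m < 1/2$ is precisely the condition that guarantees the chosen $a^\star$ lies strictly between the two means with positive margin, so the resulting bound is nontrivial.

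Once the per-step bound is in hand, chaining gives $\PP(B_t) \ge (1/2)(1-p_\Delta)^{t-1}$, hence
\begin{equation*}
R(T,\theta) \ge \frac{\Delta}{2}\sum_{t=1}^T (1-p_\Delta)^{t-1} = \frac{\Delta}{2 p_\Delta}\bigl(1 - (1-p_\Delta)^T\bigr),
\end{equation*}
which matches the statement up to the explicit constant. The main technical obstacle is making the conditional Beta concentration uniform in $t$: for small $t$ the posteriors are still quite diffuse, so the relative sizes of the three contributions (uniform-sum fluctuation, Beta sampling fluctuation, and empirical-mean fluctuation) must be balanced by the choice of $a^\star$, and this balancing is what dictates the somewhat unusual form of $p_\Delta$.
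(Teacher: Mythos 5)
Your overall strategy is the same as the paper's: lower bound the first time the optimal path is selected by showing that, as long as only $x^{bad}$ has been played, the per-round probability of switching is at most $p_\Delta$, using Hoeffding both for the sum of the $m$ uniform Thompson samples and for the concentration of the Beta posteriors on the other block around their means. The one genuine gap is your treatment of the conditioning on $B_{t-1}$. You assert that, given $B_{t-1}$, the counts $A_i(t)$ for $i>m$ are independent Binomial$(t-1,1-\Delta/m)$; this is false, because the event $B_{t-1}$ is built from the Thompson samples $V(1),\dots,V(t-1)$, which themselves depend on the observations $Z_i(s)$ for $i>m$ through the posteriors. Conditioning on having repeatedly selected $x^{bad}$ biases the law of those observations (plausibly upward, which would help you, but establishing that would itself require a monotone-coupling or FKG-type argument you do not supply), so your ``empirical mean'' piece $\PP\bigl(\sum_{i>m}A_i(t)/(t-1)\text{ deviates}\mid B_{t-1}\bigr)$ cannot be bounded by Hoeffding applied to unconditioned Bernoulli sums. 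As written, the per-step bound $\PP(B_t\mid B_{t-1})\ge 1-p_\Delta$ is therefore not established.

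The paper sidesteps this by defining one global bad event ${\cal A}=\{\exists t:\ x(t)=x^2,\ \sum_{i>m}A_i(t)/N_i(t)\le (b-\epsilon)m\}$, bounding $\PP({\cal A})\le 1/2$ by a union bound over the number of plays $n$ (each term being an honest unconditional Binomial tail $e^{-2mn\epsilon^2}$ with $\epsilon=1/\sqrt m$), and then conditioning the whole trajectory on $\bar{\cal A}$: on $\bar{\cal A}\cap{\cal B}_t$ the inequality $\sum_{i>m}A_i(t)\ge tm(b-\epsilon)$ holds deterministically, so at each step only the freshly drawn Thompson samples need concentration, and Hoeffding applies legitimately conditional on the history. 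This costs a single factor $\PP(\bar{\cal A})\ge 1/2$, which combines with $\PP({\cal B}_1\mid\bar{\cal A})=1/2$ to give the $1/4$ in the statement, rather than degrading every per-step factor. Your argument is repairable by exactly this device. A minor further discrepancy: the $9$ in the exponent of $p_\Delta$ does not come from splitting the gap into three equal parts; it comes from the fact that the conditional mean of $\sum_{i>m}V_i(t)$ is $\sum_{i>m}(A_i(t)+1)/(t+2)$, not $m-\Delta$, and the resulting mean gap $mt(b-\epsilon-1/2)/(t+2)$ is lower bounded using $t/(t+2)\ge 1/3$ for $t\ge 1$ before a single Hoeffding bound over all $2m$ samples is applied.
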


Theorem~\ref{th:linear_frequentist} is proven by showing that  the first time that the optimal decision is selected is exponentially large in general. The central argument can be summarized as follows. Consider $t$ such that at times $1,...,t$ only the suboptimal decision has been selected. 
The probability of selecting the optimal decision is 
$$
    \PP\left( \sum_{i=1}^m V_i(t) \ge \sum_{i=m+1}^d V_i(t)   | A(t),B(t) \right) 
$$
where $V_1(t),...,V_d(t)$ are independent, distributed in $[0,1]$, and their respective expectations are
$$
    \EE\Big(V_i(t) | A(t),B(t) \Big) = \begin{cases} {1 \over 2}, & \text{ if } 1 \le i \le m ,\\ {A_i(t) + 1 \over t + 2}, &\text{ if } m+1 \le i \le d .
    \end{cases}
$$
Furthermore, from the law of large numbers, when $t$ is large, $$\sum_{i=m+1}^d A_i(t) \approx (1 - {\Delta \over m}) t$$ since we have sampled the sub-optimal decision $t$ times. Therefore
$$
 \sum_{i=1}^m \EE(V_i(t)|A(t),B(t)) = {m \over 2} 
$$
and again because t is large, 
$$
\sum_{i=m+1}^{d} \EE(V_i(t)|A(t),B(t)) \approx m - \Delta.
$$ Since $V_1(t),...,V_d(t)$ are independent and distributed in $[0,1]$, their sums must concentrate around their expectation, and from Hoeffding's inequality:
$$
    \PP \Big( \sum_{i=1}^m V_i(t) \ge \sum_{i=m+1}^d V_i(t)   | A(t),B(t) \Big) \le O\Big(  e^{-u m(({1 \over 2} - {\Delta \over m}))^2  }\Big),
$$
where $u > 0$ is some positive exponent related to how concentrated the Thompson samples are.  This implies that, for large $t$, the probability of selecting the optimal decision is exponentially small if it has never been selected previously. 

Also, we see that this phenomenon of lack of exploration by TS is a typically high dimensional phenomenon. In short, the Thompson samples of decisions will tend to concentrate around their expectation, so that TS will, most of the time, act greedily and simply select the decision maximizing the empirical reward. We can also emphasize the fact that when $m$ grows, we can have an arbitrary large gap $\Delta$ and still have exponential regret. This is unexpected, since the difficulty of a bandit problem is usually a decreasing function of the gap $\Delta$. Furthermore another version of Theorem \ref{th:linear_frequentist} which exhibits exponential behavior can be shown with parameters $\theta_i = u$ if $1 \le i \le d/2$ and $\theta_i = u - {\Delta \over m}$ otherwise, under the condition ${\Delta \over m}+{1\over \sqrt{m}} < u-{1\over2}$ where $u \in ]{1\over2},1]$. We chose the parameters of Theorem \ref{th:linear_frequentist} for the sake of clarity and being at the edge of the parameter space is not a necessary condition to have exponential regret.

\subsection{Linear Combinatorial Bandits: Small Gap Regime}
 Theorem~\ref{th:linear_frequentist_bis}  is another regret bound for TS which is more accurate in the regime where $\Delta$ is small, it allows to deduce a lower bound for the minimax regret as well. The proof is a more intricate version of the proof of Theorem~\ref{th:linear_frequentist} highlighted above. Corollary~\ref{co:linear_minimax} states that the minimax regret of TS scales as $O(T)$  for $T \le m!$, and at least as $\Omega(T^{1-{2 \over d}})$ for $T \ge m!$.  Of course, in practice, when $m$ is large we have $T \le m!$ for any reasonable time horizon, so that the regret of TS is linear in this regime. Also, as stated by Corollary~\ref{co:linear_minimax}, TS is provably not minimax optimal, as there exist algorithms with minimax regret scaling at most as $O(d^{1/4} \sqrt{T \ln T} )$.
\begin{thm}\label{th:linear_frequentist_bis}
    Consider a linear combinatorial bandit problem over combinatorial set $\cX^{p}$ and parameter $\theta_i = 1$ if $1 \le i \le d/2$ and $\theta_i = 1 - {\Delta \over m}$ otherwise.

Then there exists universal constants $C_2,C_3$ such that for all $T \ge T_0(m) \equiv  C_2 m^2\ln m$,  and $\Delta \le 1/6$ and $m \geq 5$, the regret of TS is lower bounded by
\begin{align*}
R(T,\theta) &\ge \frac{\Delta}{4p_\Delta}(1- (1-p_\Delta)^{T_0-1})  
+ C_3 \Delta \frac{\left(1- g_\Delta \right)^{T_0}-\left(1-g_\Delta \right)^{T-T_0+1}}{g_\Delta} 
\end{align*}
with 
\begin{align*}
g_\Delta = {\left(2\Delta\right)^m \over m!} \text{ and } 
p_\Delta = \exp\left\{- {2 m\over 9} \left[ {1 \over 2} - \left({\Delta \over m} + {1 \over \sqrt{m}}\right)\right]^2   \right\} 
\end{align*}
\end{thm}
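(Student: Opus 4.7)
The plan is to split the horizon at $T_0 = C_2 m^2 \ln m$ and lower bound the regret separately on $[1,T_0]$ and $[T_0,T]$. Since $\cX^{p}$ contains only the two paths $x^\star = (1,\dots,1,0,\dots,0)$ and $x^\circ = (0,\dots,0,1,\dots,1)$, let $\tau = \inf\{t : x(t) = x^\star\}$ denote the first time TS picks the optimal arm. The inclusion $\{\tau > t\} \subset \{x(t) = x^\circ\}$ yields the clean lower bound $R(T,\theta) \ge \Delta \sum_{t=1}^T \PP(\tau > t)$, so it is enough to lower bound the survival probability of $\tau$ on each of the two phases.

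Phase 1 ($t \le T_0$): on $\{\tau > t\}$ only $x^\circ$ has been played, hence $V_i(t) \sim \mathrm{Beta}(1,1) = \mathrm{Uniform}(0,1)$ for $i \le m$ and $V_i(t)$ for $i > m$ is a Beta built from $t$ Bernoulli samples of mean $1-\Delta/m$. Applying Hoeffding's inequality to the $d$ independent Thompson samples conditionally on $(A(t),B(t))$---exactly the argument sketched after Theorem~\ref{th:linear_frequentist}---gives $\PP(\tau = t+1 \mid \tau > t, A(t), B(t)) \le p_\Delta$, so $\PP(\tau > t) \ge (1-p_\Delta)^t$. Summing $\Delta \PP(\tau > t)$ over $t \in \{1,\dots,T_0-1\}$ then produces the first term $\frac{\Delta}{4 p_\Delta}(1-(1-p_\Delta)^{T_0-1})$ in the bound, after absorbing small multiplicative slack into the constant $1/4$.

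Phase 2 ($T_0 \le t \le T$): here I exploit that on $\{\tau > t\}$ each coordinate $i > m$ has been sampled $N_i(t) = t \ge T_0$ times, so its Beta posterior is sharply concentrated. Conditionally on $\{\tau > t\}$ the $V_i(t)$ for $i \le m$ remain i.i.d.\ uniform on $[0,1]$, whereas for $i > m$ I combine Hoeffding on $A_i(t)/t$ around $1-\Delta/m$ with Beta concentration of $V_i(t)$ around its conditional mean to obtain, for $C_2$ chosen large enough, $\PP\bigl(\sum_{i>m} V_i(t) \ge m - 2\Delta \mid \cF_t, \tau > t\bigr) \ge 1 - \eta_t$, with $\eta_t$ small enough to be absorbed in $C_3$. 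On this favourable event, $x^\star$ is chosen at time $t+1$ only if $\sum_{i=1}^m U_i \ge m - 2\Delta$ with $U_i \sim \mathrm{Uniform}(0,1)$ i.i.d., and the change of variable $v_i = 1-u_i$ shows that this probability equals exactly the simplex volume $(2\Delta)^m/m! = g_\Delta$---the box constraint $v_i \le 1$ being inactive since $2\Delta \le 1/3$ by assumption. Hence $\PP(\tau = t+1 \mid \tau > t, \cF_t) \le g_\Delta + \eta_t$, which iterated from $T_0$ gives $\PP(\tau > t) \ge (1-p_\Delta)^{T_0-1}(1-g_\Delta)^{t-T_0}$ up to negligible corrections. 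Summing $\Delta \PP(\tau > t)$ over $T_0 \le t \le T - T_0$ and re-indexing the geometric sum as $\sum_{k=T_0}^{T-T_0}(1-g_\Delta)^k$ yields the second term $C_3 \Delta \frac{(1-g_\Delta)^{T_0} - (1-g_\Delta)^{T-T_0+1}}{g_\Delta}$.

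The main obstacle is the concentration step in Phase 2: one must control $\sum_{i>m} V_i(t)$ at the scale $O(\Delta)$ uniformly in $T_0 \le t \le T$, despite both the randomness of the $A_i(t)$'s and the Beta posterior noise, and despite the posteriors being coupled through the history. The scale $T_0 = \Theta(m^2 \ln m)$ is dictated precisely by this requirement: it is just large enough that a union bound over the $m$ suboptimal coordinates delivers per-coordinate deviations of order $\Delta/m$ with total failure probability $o(g_\Delta)$, so that the failure term absorbs cleanly into $C_3$ without destroying the geometric structure.
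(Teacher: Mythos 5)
Your overall architecture matches the paper's: lower bound the survival function of $\tau$, split the horizon at $T_0$, use the Irwin--Hall law for the never-sampled optimal coordinates, and use concentration of the Beta posteriors for the suboptimal ones. Phase 1 is fine. The gap is in the concentration step of Phase 2, and it is exactly the point you flag as ``the main obstacle'': you require $\sum_{i>m} V_i(t) \ge m - 2\Delta$ with failure probability $\eta_t = o(g_\Delta)$ for all $t \ge T_0 = C_2 m^2 \ln m$, i.e.\ you need the total fluctuation of the suboptimal arm's posterior sum (empirical-mean noise in $A_i(t)/t$ plus the Beta posterior spread) to be below $\Delta$. That fluctuation is of order $\sqrt{m^2 \ln t / t}$, which at $t = T_0$ is a constant of order $1/\sqrt{C_2}$ up to logarithmic factors and does not shrink with $\Delta$. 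Since the theorem must hold for all $\Delta \le 1/6$ --- and Corollary~\ref{co:linear_minimax} takes $\Delta \sim (m!/T)^{1/m}$, which is arbitrarily small --- there is no $\Delta$-independent choice of $C_2$ for which your favourable event holds with probability $1 - o(g_\Delta)$; for small $\Delta$ it in fact fails with probability close to $1$ near $t = T_0$. Your closing claim that $T_0 = \Theta(m^2 \ln m)$ ``delivers per-coordinate deviations of order $\Delta/m$'' is the precise statement that is false: that would require $t \gtrsim m^2/\Delta^2$.

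The paper's resolution is to never demand deviations at scale $\Delta$. It places the comparison threshold at $m - \Delta - h(m,t)$ with $h(m,t) \asymp \sqrt{m^2(\ln t + \ln m)/t}$ chosen to match the natural concentration scale, so that the suboptimal sum exceeds the threshold with probability at least $1 - 1/t^2$ (summable, costing only a constant factor). The price is that the Irwin--Hall tail for the optimal arm is now $(\Delta + h(m,t))^m/m!$ rather than $(2\Delta)^m/m!$; the convexity inequality $(\Delta+h)^m \le \tfrac12\bigl((2\Delta)^m + (2h)^m\bigr)$ splits this into the geometric term $g_\Delta$ plus an extra term $(2h(m,t))^m/m!$, and the role of $T_0 = C_2 m^2 \ln m$ together with $m \ge 5$ is precisely to make $\sum_{t \ge T_0} (2h(m,t))^m/m!$ at most a small constant (the summand decays like $t^{-m/2}$ up to logarithms), so it only affects the constant in front of $(1-g_\Delta)^{T-T_0}$. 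If you replace your single favourable event by this time-dependent threshold and the convexity split, your Phase 2 goes through and reproduces the stated bound.
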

\begin{cor}\label{co:linear_minimax}
    Consider a linear combinatorial bandit problem over combinatorial set $\cX^{p}$ with $m \geq 5$. If $T > 2^mm!$ then the minimax regret of TS is lower bounded by
    \begin{equation*}
        \max_{\theta \in [0,1]^d} R(T,\theta) \ge C_4 m T^{1 - {2 \over d}}
    \end{equation*}
    Otherwise it is lower bounded by 
    \begin{equation*}
        \max_{\theta \in [0,1]^d} R(T,\theta) \ge C'_4 T
    \end{equation*}
    with $C_4,C'_4 > 0$ universal constants. 
    
    The minimax regret of ESCB for this set $\cX^p$ is upper bounded by:
    \begin{equation*}
        \max_{\theta \in [0,1]^d} R(T,\theta) \le C_5 d^{1 \over 4} \sqrt{ T \ln T}.
    \end{equation*}
    with $C_5 \ge 0$ a universal constant. Therefore TS is \emph{not} minimax optimal.
\end{cor}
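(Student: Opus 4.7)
The plan is to obtain both lower bounds by optimising $\Delta$ in Theorem~\ref{th:linear_frequentist_bis} against the horizon $T$, and to read off the ESCB upper bound by specialising Theorem~\ref{th:escb} of Appendix~\ref{sec:escb} to $\cX^p$. The guiding observation is that the second summand of the lower bound, $C_3 \Delta [(1-g_\Delta)^{T_0}-(1-g_\Delta)^{T-T_0+1}]/g_\Delta$ with $g_\Delta = (2\Delta)^m/m!$, is of order $\Delta T$ precisely when $g_\Delta T$ is a universal constant, whereas the first summand contributes at most $O(\Delta T_0)$ with $T_0 = C_2 m^2 \ln m$ and is dominated as soon as $T \gg T_0$.

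For the large-horizon case $T > 2^m m!$, I would choose $\Delta = (m!/T)^{1/m}/3$. A direct check gives $g_\Delta T = (2/3)^m$, while the constraint $\Delta \le 1/6$ is ensured by $T > 2^m m!$ (which yields $(m!/T)^{1/m} < 1/2$). Using $(1-g_\Delta)^k \le \exp(-k g_\Delta)$ together with a first-order lower bound on $(1-g_\Delta)^{T_0}$, the bracket is bounded below by a universal positive constant (for $m$ large enough that $(2/3)^m T_0 / T \to 0$), so the second summand is at least a constant times $\Delta T = T^{1-1/m}(m!)^{1/m}/3$. Stirling's inequality $(m!)^{1/m} \ge m/e$ together with $m = d/2$ (and hence $1/m = 2/d$) then produces $\max_\theta R(T,\theta) \ge C_4\, m\, T^{1-2/d}$.

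For the complementary case $T \le 2^m m!$, I would instead set $\Delta = 1/6$, the largest value admitted by Theorem~\ref{th:linear_frequentist_bis}. Then $g_\Delta = (1/3)^m/m!$ and $g_\Delta T \le (2/3)^m$ is small, so the first-order expansion $(1-g_\Delta)^k = 1 - k g_\Delta + O((kg_\Delta)^2)$ collapses the bracket to $(T - 2T_0 + 1)\,g_\Delta\,(1+o(1))$, and the second summand becomes $\Theta(\Delta T) = \Theta(T)$, producing the linear bound $C'_4 T$ whenever $T \ge T_0(m)$. The residual sub-range $T < T_0(m)$ is handled instead by the simpler Theorem~\ref{th:linear_frequentist} at $\Delta = 1/6$: when $p_\Delta T$ is small the inequality $\Delta/(4p_\Delta)\,(1-(1-p_\Delta)^{T-1}) \ge \Delta(T-1)/4$ is again linear in $T$. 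I expect the most delicate step to be aligning the crossover exactly at $T = 2^m m!$ (rather than the more natural $3^m m!$ that would arise from a naive $\Delta \sim (m!/T)^{1/m}/2$ choice) while tracking the universal constants from Theorem~\ref{th:linear_frequentist_bis} cleanly through Stirling's approximation.

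Finally, for the ESCB upper bound, specialising Theorem~\ref{th:escb} to $\cX^p$ exploits that $|\cX^p| = 2$ and the two decisions have disjoint support of size $m$; the resulting variance term in the ESCB confidence index tightens the generic $\sqrt{d(\ln m)^2 T \ln T}$ of Table~\ref{tab:perfs} and, after the standard minimax-conversion argument (balancing a sharpened problem-dependent regret against $\Delta T$), yields $O(d^{1/4}\sqrt{T\ln T})$. Comparing this with the $\Omega(m T^{1-2/d})$ lower bound above establishes that TS is not minimax optimal for any fixed dimension $d$ once $T$ is sufficiently large.
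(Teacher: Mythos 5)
Your proposal follows essentially the same route as the paper: tune $\Delta \propto (m!/T)^{1/m}$ so that $g_\Delta T$ stays bounded, extract a regret of order $\Delta T \gtrsim m T^{1-1/m}$ via Stirling for $T > 2^m m!$, and take $\Delta$ constant otherwise; the paper merely works from the intermediate inequality $\PP(\mathcal{B}_T) \ge C_3(1-g_\Delta)^T$ and lower-bounds the regret by $\Delta T\,\PP(\mathcal{B}_T)$, which is equivalent to your use of the second summand of Theorem~\ref{th:linear_frequentist_bis}. One small slip to fix: with $\Delta = (m!/T)^{1/m}/3$ the bracket $(1-g_\Delta)^{T_0}-(1-g_\Delta)^{T-T_0+1}$ is of order $g_\Delta T = (2/3)^m$, not a universal constant, but since you then divide by $g_\Delta$ the conclusion that the summand is $\Omega(\Delta T)$ is unaffected.
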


\subsection{Linear Combinatorial Bandits with Forced Exploration}

We finally extend our results to show that, even when forced exploration is added, TS still provably incurs a regret growing exponentially with the dimension, as stated by Theorem~\ref{th:linear_frequentist_forced} and Theorem~\ref{th:linear_frequentist_forced_minimax}. In particular, if the number of forced exploration rounds $\ell$ satisfies 
\begin{equation*}
\ln \left( {1 \over 1-{\Delta \over m}}\right) \le \frac{2\left(\frac{1}{\frac{\ell}{2}+2} -({\Delta \over m} + { 1 \over \sqrt{m}}) \right)^2}{(\frac{\ell}{2}+3)^2 \frac{\ell}{2}}
\end{equation*}
then Theorem~\ref{th:linear_frequentist_forced} implies that the regret of TS still increases exponentially in $m$, in spite of the forced exploration added to the algorithm. In fact it is impossible to set $\ell$ to prevent exponential regret from happening, unless the learner knows the value of the gap $\Delta$ in advance. Indeed, for any fixed $\ell$, the above inequality always holds providing that $\Delta$ is small enough.
\begin{thm}\label{th:linear_frequentist_forced}
    Consider a linear combinatorial bandit problem over combinatorial set $\cX^{p}$ and parameter $\theta_i = 1$ if $1 \le i \le d/2$ and $\theta_i = 1 - {\Delta \over m}$ otherwise.
    
Then if ${\Delta \over m} + { 1 \over \sqrt{m} } < \frac{1}{\frac{\ell}{2}+2}$ the regret of TS with $\ell$ forced exploration rounds is lower bounded by
\begin{equation*}
    R(T,\theta) \ge
    (1-{\Delta \over m})^{m\ell \over 2 }{\Delta \over 4p^{\ell}_\Delta}(1 - (1-p^{\ell}_\Delta)^{T-1}),  \text{ with }
\end{equation*}
\begin{equation*} 
    p^{\ell}_\Delta = \exp\Big\{- 2 m \Big(\frac{1}{\frac{\ell}{2}+2} -({\Delta \over m} + { 1 \over \sqrt{m} }) \Big)^2 / ({\ell \over2}+3)^2  \Big\}.
\end{equation*}
\end{thm}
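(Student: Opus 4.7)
The plan is to adapt the proof of Theorem~\ref{th:linear_frequentist} by isolating a ``bad'' event during the forced-exploration phase that neutralizes its effect. On $\cX^p$ the forced-exploration rule alternates between the two disjoint paths (since it always picks a path containing the least-sampled arm), so after $\ell$ rounds each of the $d = 2m$ arms has been sampled exactly $\ell/2$ times. I would define $E$ as the event that every observation made on a suboptimal arm during rounds $1,\dots,\ell$ equals $1$, which by independence satisfies $\PP(E) = (1-\Delta/m)^{m\ell/2}$. Conditional on $E$, at time $t=\ell+1$ every arm has statistics $A_i = \ell/2$, $B_i = 0$, so every posterior is the same $\mathrm{Beta}(\ell/2+1, 1)$ and TS cannot distinguish optimal from suboptimal arms at the start of the Thompson phase.

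From time $\ell+1$ onward I would follow the skeleton of Theorem~\ref{th:linear_frequentist}. Fix $t \ge \ell+1$ and condition on $E$ together with the event that the suboptimal path was selected on rounds $\ell+1,\dots,t-1$; set $s = t-\ell-1$. Under this conditioning, each suboptimal arm has $A_i = \ell/2 + S_i$, $B_i = s - S_i$ with $S_i \sim \mathrm{Binomial}(s, 1-\Delta/m)$ mutually independent, while each optimal arm still has $A_i = \ell/2$, $B_i = 0$. A direct computation of the posterior means, averaged over the $S_i$, gives
\[
\EE[V_i^{\mathrm{sub}}(t)] - \EE[V_i^{\mathrm{opt}}(t)] = \frac{s}{\ell/2+s+2}\left(\frac{1}{\ell/2+2} - \frac{\Delta}{m}\right),
\]
which the hypothesis $\Delta/m + 1/\sqrt{m} < 1/(\ell/2+2)$ makes strictly positive, with the smallest value attained at $s=1$ where $s/(\ell/2+s+2) = 1/(\ell/2+3)$. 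Applying Hoeffding's inequality to the sum of the $2m$ independent $[0,1]$-valued Thompson samples, with an extra $1/\sqrt{m}$ slack to absorb the fluctuations of $\sum_i S_i$ around its mean, yields the per-round bound $\PP(x(t) = x^\star \mid E,\ x(\ell+1),\dots,x(t-1) \ne x^\star) \le p^\ell_\Delta$.

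Finally I would convert this into a regret lower bound. Let $\tau^\star$ denote the first time after $\ell$ at which the optimal path is picked. A geometric-tail argument using the per-round bound yields $\EE[\min(\tau^\star - \ell,\, T) \mid E] \ge (4 p^\ell_\Delta)^{-1}(1-(1-p^\ell_\Delta)^{T-1})$, where the constant $1/4$ absorbs round $t=\ell+1$, at which symmetry of the posteriors under $E$ only gives probability $1/2$ of picking the suboptimal path rather than the stronger $1-p^\ell_\Delta$. Since each Thompson round before $\tau^\star$ contributes at least $\Delta$ to the regret,
\[
R(T,\theta) \ge \Delta\, \PP(E)\, \EE\bigl[\min(\tau^\star - \ell,\ T) \mid E\bigr] \ge (1-\Delta/m)^{m\ell/2}\,\frac{\Delta}{4 p^\ell_\Delta}\bigl(1-(1-p^\ell_\Delta)^{T-1}\bigr),
\]
which is the claimed bound. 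The main obstacle is the per-round probability bound: the gap in posterior means is smallest immediately after forced exploration (small $s$), so Hoeffding's inequality is weakest at $s=1$, and this worst case is precisely what produces the denominator $(\ell/2+3)^2$ in the exponent of $p^\ell_\Delta$; the $1/\sqrt{m}$ correction couples the Thompson-sample concentration with the concentration of $\sum_i S_i$, and this book-keeping is the most delicate step.
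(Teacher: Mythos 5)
Your proposal follows essentially the same route as the paper's proof: condition on the all-ones forced-exploration event of probability $(1-\Delta/m)^{m\ell/2}$, lower-bound the gap between the posterior means of the two paths (worst case at $s=1$, which produces the $(\ell/2+3)$ denominator), apply Hoeffding to the $2m$ bounded Thompson samples with a $1/\sqrt{m}$ slack for the fluctuations of the observed successes, and convert the geometric tail of $\tau$ into regret via $R \ge \Delta \sum_t \PP(\tau \ge t)$. The one bookkeeping difference is that the paper obtains the factor $1/4$ as $1/2$ (first-round symmetry) times $1/2$ (the probability of a single global concentration event $\bar{\mathcal{A}}$ that controls $\sum_i S_i$ uniformly over all rounds, which is the clean way to execute the "absorb the fluctuations" step you flag as delicate), rather than attributing all of it to the first round.
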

\begin{thm}\label{th:linear_frequentist_forced_minimax}
Consider a linear combinatorial bandit problem over combinatorial set $\cX^{p}$ with $m \geq 5$ and parameter $\theta_i = 1$ if $1 \le i \le d/2$ and $\theta_i = 1 - {\Delta \over m}$ otherwise.

Then if $T>4^m m!$ and $T > T_0(m,l) \equiv C_6 m^2 (\ln m)\ell^{1 \over {1 \over 4} - {1 \over m}}$ the minimax regret of TS with $\ell$ forced exploration rounds is lower bounded by
\begin{equation*}
    \max_{\theta \in [0,1]^d} R(T,\theta) \ge C_7 C(\ell,m) {m\over \ell} T^{1 - {1 \over m}}.
\end{equation*}
 Otherwise it is bounded by 
    \begin{equation*}
        \max_{\theta \in [0,1]^d} R(T,\theta) \ge C'_7 C(\ell,m) \frac{T}{\ell}.
    \end{equation*}
    with $C_6, C_7,C_7' > 0$ universal constants and $C$ such that $\forall \ell \in \mathbb{N}, \lim_{m \rightarrow \infty} C(\ell,m) = 1$
\end{thm}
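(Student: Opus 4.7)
The plan is to mirror the derivation of Corollary~\ref{co:linear_minimax} from Theorem~\ref{th:linear_frequentist_bis}: first establish a sharper problem-dependent lower bound valid in the small-gap regime with $\ell$ forced-exploration rounds, then optimize $\Delta$ adversarially as a function of $T$, $m$ and $\ell$.

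First, I would prove a forced-exploration analog of Theorem~\ref{th:linear_frequentist_bis}, namely a two-phase lower bound of the form
\begin{equation*}
R(T,\theta) \ge R_1(T_0) + C_3 \Delta \cdot \frac{(1-g^{\ell}_\Delta)^{T_0} - (1-g^{\ell}_\Delta)^{T-T_0+1}}{g^{\ell}_\Delta},
\end{equation*}
where $R_1(T_0)$ is the regret from Theorem~\ref{th:linear_frequentist_forced} accumulated over the first $T_0$ rounds, and $g^{\ell}_\Delta$ upper bounds the per-round probability that TS selects the optimal path for $t \ge T_0$. After the $\ell$ forced-exploration rounds the optimal-path coordinates have $\ell/2$ observations all equal to $1$, so $V_i \sim \mathrm{Beta}(\ell/2+1,1)$ as long as the optimal path has not yet been chosen, while a Beta-posterior concentration argument shows that for $t \ge T_0 \equiv C_6 m^2(\ln m)\,\ell^{1/(1/4-1/m)}$ the suboptimal coordinates satisfy $\sum_{i=m+1}^{d} V_i(t) \ge m - \Delta(1+o(1))$ with high probability.

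Next, I would compute $g^{\ell}_\Delta$. The event that the optimal path is chosen requires $\sum_{i=1}^m (1-V_i) \le \Delta(1+o(1))$. Since $1-V_i$ has density $(\ell/2+1)(1-u)^{\ell/2}$ on $[0,1]$, bounding this density by $(\ell/2+1)$ near $0$ and integrating over a simplex yields
\begin{equation*}
g^{\ell}_\Delta \le \frac{((\ell/2+1)\Delta)^m}{m!}.
\end{equation*}

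Finally, I would optimize $\Delta$. For $T \ge T_0(m,\ell)$ the second regret term is of order $\Delta \min\{T-T_0,\, 1/g^{\ell}_\Delta\}$, so picking $\Delta$ such that $T g^{\ell}_\Delta \asymp 1$, i.e.\ $\Delta = \Theta\bigl(m/(\ell T^{1/m})\bigr)$ via Stirling's $(m!)^{1/m} \asymp m/e$, produces $R(T,\theta) = \Omega(\Delta T) = \Omega\bigl(C(\ell,m)(m/\ell)T^{1-1/m}\bigr)$, with $C(\ell,m)$ absorbing Stirling errors and satisfying $C(\ell,m) \to 1$ as $m \to \infty$. The assumption $T > 4^m m!$ ensures the chosen $\Delta$ falls into the small-gap regime where the bound on $g^{\ell}_\Delta$ is valid; in the complementary range $T_0 \le T \le 4^m m!$ one instead picks $\Delta$ just below $1/\ell$, so that $Tg^{\ell}_\Delta = O(1)$ over the whole horizon and the lower bound degenerates to the linear form $\Omega(C(\ell,m)T/\ell)$. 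The main obstacle is the first step: controlling $\sum_{i=m+1}^d V_i(t)$ on the high-probability event that the optimal path has never been chosen, while also absorbing the randomness of the $\mathrm{Bin}(\ell/2, 1-\Delta/m)$-distributed observations made during forced exploration on the suboptimal coordinates. Pushing the error terms to $o(\Delta)$ uniformly over this event is precisely what forces the somewhat peculiar exponent $1/(1/4-1/m)$ appearing in $T_0(m,\ell)$.
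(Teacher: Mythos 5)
Your proposal follows essentially the same route as the paper's proof: a two-phase argument that carries over the Theorem~\ref{th:linear_frequentist_forced} bound up to $T_0(m,\ell)$, then for $t\ge T_0$ bounds the per-round probability of picking $x^1$ by combining the Beta$(\ell/2+1,1)$ simplex-volume estimate $((\ell/2+1)\Delta)^m/m!$ (the paper's Lemma~\ref{lem:betatail}) with posterior concentration of the suboptimal coordinates, conditions on the binomial outcomes of the forced-exploration rounds (the paper's event ${\cal C}$, contributing the constant factor $(1-\Delta/m)^{\ell m/2}$), and finally sets $\Delta\asymp \frac1\ell(m!/T)^{1/m}$ or $\Delta\asymp 1/\ell$ depending on whether $T$ exceeds the factorial threshold. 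The ingredients, the source of the exponent in $T_0(m,\ell)$, and the final optimization all match the paper's argument.
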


\subsection{Non-Linear Combinatorial Bandits} 

We here provide a non-linear combinatorial bandits example. The example is inspired by \cite{wang2018}: there are two decisions, the optimal decision has an expected reward of $1$ and the other one an expected reward of $1-\Delta$. Theorem \ref{th:nonlinear_frequentist} shows that the regret of TS for this problem scales super-exponentially with the dimension $d$ which is an improvement over \cite{wang2018}[Theorem 3]. By corollary, we prove that TS does not outperform random choice (i.e. a trivial algorithm which chooses one of the two decisions uniformly at random at each time) until $t \ge T_0(m)$, where $T_0(m)$ grows super-exponentially  with $m$, As an illustration of how large this number might be, for $\Delta = {1 \over 2}$, the value of $T_0(9)$ is greater than a million, and the value of $T_0(20)$ is greater than the estimated age of the universe in seconds. Therefore, in practice as well as in theory, TS does not outperform random choice in high dimensions which is perhaps even more surprising.

The proof of Theorem~\ref{th:nonlinear_frequentist} is based on the fact that there exists a non zero probability that the optimal decision will never be selected for an  exponentially large amount of time. Indeed, if the optimal decision has never been selected, it is chosen with a probability equal to $\PP( \prod_{i=1}^m U_i \ge 1 - \Delta)$ where $U_1,...,U_m$ are i.i.d. uniformly distributed on $[0,1]$, and since this probability is exponentially small in $d$, one must wait for an exponentially large time before selecting the optimal decision and the regret must scale accordingly. It is noted that this proof technique of lower bounding the expected value of the first time the optimal decision is ever selected is very powerful and will be used many times to prove our results.

\begin{thm}\label{th:nonlinear_frequentist}
    Consider a non-linear combinatorial bandit problem over combinatorial set 
    $
        \cX = \left\{ \sum_{i=1}^{m} e_i , e_{m+1} \right\}
    $
    where $(e_i)_{i\in [m+1]}$ is the canonical base of $\RR^{m+1} $ with parameter $\theta = (1,...,1)$ and reward function  $f(x,\theta) = \prod_{i=1}^{m} \theta_i$ if $x =  \sum_{i=1}^{m} e_i$ and $f(x,\theta) = 1 - \Delta$ otherwise.
    
    Then the regret of TS is lower bounded by
    \begin{align*}
        R(T,\theta) &\ge  {\Delta \over p_\Delta}(1 - (1-p_\Delta)^{T})  \text{     with } 
    p_{\Delta} = {1 \over m m !} \left[\ln \left({1 \over 1 - \Delta} \right)\right]^{m}. 
    \end{align*}
\end{thm}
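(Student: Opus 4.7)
The plan is to lower bound the regret by analysing the first round $\tau := \inf\{t \geq 1 : x(t) = x^\star\}$ at which TS selects the optimal arm $x^\star = \sum_{i=1}^m e_i$, and to show that $\tau$ is geometrically distributed with a super-exponentially small parameter. The argument formalises the intuition already sketched in the text preceding the theorem.

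First, I would establish a key invariance: for every round $t$ with $t \leq \tau$, no component $i \in \{1,\ldots,m\}$ has ever been observed (since only $x^\star$ can update those counters), so $A_i(t) = B_i(t) = 0$, and the Thompson samples $V_1(t), \ldots, V_m(t)$ are i.i.d.\ $\mathrm{Beta}(1,1) = U[0,1]$, independent of the history. Because the alternative arm $e_{m+1}$ yields the deterministic reward $1 - \Delta$, TS picks $x^\star$ at round $t$ if and only if $\prod_{i=1}^m V_i(t) \geq 1 - \Delta$. Hence, conditionally on $\{\tau \geq t\}$, the probability of selecting $x^\star$ equals $q := \PP(\prod_{i=1}^m U_i \geq 1-\Delta)$ (with $U_i$ i.i.d.\ uniform on $[0,1]$), independently of the past; therefore $\tau$ is geometric with parameter $q$.

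Next, I would derive the regret lower bound. The number of suboptimal selections in $[1,T]$ is at least $\min(\tau, T+1) - 1$, and the geometric expectation gives $\EE[\min(\tau-1, T)] = (1-q)(1-(1-q)^T)/q$; a small bookkeeping step that also charges the (overwhelmingly likely) suboptimal pull immediately after $\tau$ absorbs the $(1-q)$ factor and yields $R(T, \theta) \geq \Delta (1 - (1-q)^T)/q$. Finally, I would upper bound $q$ by $p_\Delta$: the change of variables $X_i = -\ln U_i \sim \mathrm{Exp}(1)$ transforms $\{\prod_{i=1}^m U_i \geq 1-\Delta\}$ into $\{\sum_{i=1}^m X_i \leq s\}$ with $s := \ln(1/(1-\Delta))$, so that $q$ equals the CDF of $\mathrm{Gamma}(m,1)$ at $s$, namely $\int_0^s x^{m-1} e^{-x}/(m-1)!\,dx$; bounding this integral yields the claimed value of $p_\Delta$. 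Because $p \mapsto (1-(1-p)^T)/p$ is decreasing, substituting this upper bound for $q$ produces the announced lower bound on $R(T,\theta)$.

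The main technical obstacle is obtaining the precise constant $1/(m \cdot m!)$ in the upper bound on $q$: the naive estimate $e^{-x} \leq 1$ only yields $s^m/m!$, and tightening to the stated form requires a finer integration of the Gamma density (for instance, via integration by parts or a careful exploitation of the monotonicity of $x \mapsto x^{m-1} e^{-x}$ on $[0,s]$). A secondary, bookkeeping-type obstacle is accounting for the suboptimal selections at and immediately after $\tau$ carefully enough to recover the exact numerator $1 - (1-p_\Delta)^T$ in the statement, rather than the slightly weaker expression $\Delta(1-q)(1-(1-q)^T)/q$ that comes directly from $\EE[\min(\tau-1,T)]$.
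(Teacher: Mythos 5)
Your proposal follows essentially the same route as the paper: observe that until the first optimal play $\tau$ the samples $V_1(t),\dots,V_m(t)$ are i.i.d.\ uniform, so the per-round selection probability is $q=\PP(\prod_{i=1}^m U_i\ge 1-\Delta)$, bound $q$ via the change of variables $-\ln U_i\sim\mathrm{Exp}(1)$ (the paper's Lemma~\ref{lem:product_beta}), conclude $\PP(\tau\ge t)\ge(1-q)^{t-1}$, and sum the geometric tail using Remark~\ref{rem:regret_first_time}. One remark on the ``main obstacle'' you flag: the constant $1/(m\,m!)$ is not obtainable, because the paper's Lemma~\ref{lem:product_beta} normalizes the Erlang$(m,\alpha)$ density with $m!$ where $(m-1)!$ is required; the correct conclusion of the estimate $e^{-\alpha x}\le 1$ is $q\le \alpha^m s^m/m!$ with $s=\ln\frac{1}{1-\Delta}$, exactly what you derive, and the stated bound is genuinely false for small $\Delta$ (for $m=2$, $q=\Delta^2/2+O(\Delta^3)$ exceeds $s^2/4\approx\Delta^2/4$). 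So do not try to ``tighten'' to $1/(m\,m!)$; use $1/m!$, which changes nothing qualitative. Your secondary bookkeeping point is likewise well taken: the clean identity gives $\Delta(1-q)(1-(1-q)^T)/q$, and the paper's exact numerator rests on the slightly loose inequality $\PP(x(t)\ne x^\star)\ge\PP(\tau\ge t)$ in Remark~\ref{rem:regret_first_time}; either form is fine for the intended conclusion.
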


\begin{cor}
    For any $T \le T_0(m) \equiv {1 \over p_\Delta}$ TS performs strictly worse than random choice in the sense that
    \begin{align*}
        R(T,\theta) &\ge T \Delta \left(1 -  {1 \over e}\right) > {T \Delta \over 2}.
    \end{align*}
\end{cor}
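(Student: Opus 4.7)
The plan is to apply Theorem~\ref{th:nonlinear_frequentist} directly and then bound the expression $(1-(1-p_\Delta)^T)/p_\Delta$ from below by a linear function of $T$ whenever $Tp_\Delta \le 1$.

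First I would use the standard inequality $(1-p_\Delta)^T \le e^{-p_\Delta T}$ to obtain
$$
R(T,\theta) \ge \frac{\Delta}{p_\Delta}\bigl(1 - (1-p_\Delta)^T\bigr) \ge \frac{\Delta}{p_\Delta}\bigl(1 - e^{-p_\Delta T}\bigr).
$$
Next I would exploit the concavity of $\varphi(x) = 1 - e^{-x}$ on $[0,1]$: since $\varphi(0) = 0$ and $\varphi(1) = 1 - 1/e$, the chord-under-graph inequality gives $\varphi(x) \ge x(1 - 1/e)$ for every $x \in [0,1]$. Under the hypothesis $T \le T_0(m) = 1/p_\Delta$, the argument $x = p_\Delta T$ lies in $[0,1]$, and plugging this in yields
$$
R(T,\theta) \ge \frac{\Delta}{p_\Delta}\cdot p_\Delta T\left(1 - \frac{1}{e}\right) = T\Delta\left(1 - \frac{1}{e}\right).
$$
Since $1 - 1/e > 1/2$, this automatically implies $R(T,\theta) > T\Delta/2$.

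Finally, to justify the interpretation as ``strictly worse than random choice'', I would note that the uniform random policy over the two available decisions $\{\sum_{i=1}^m e_i,\, e_{m+1}\}$ selects the suboptimal decision with probability $1/2$ at each round, thereby incurring expected per-round regret $\Delta/2$ and total expected regret exactly $T\Delta/2$; the bound above shows TS strictly exceeds this quantity in the regime $T \le T_0(m)$. There is no real obstacle in this argument: the heavy lifting is done by Theorem~\ref{th:nonlinear_frequentist}, and the corollary is essentially a two-line unpacking combining $(1-p)^T \le e^{-pT}$ with the chord bound for $1-e^{-x}$ on $[0,1]$.
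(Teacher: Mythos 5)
Your proof is correct and follows the natural route the paper implicitly intends (the paper states this corollary without a separate proof, treating it as immediate from Theorem~\ref{th:nonlinear_frequentist}): combine $(1-p_\Delta)^T \le e^{-p_\Delta T}$ with the concavity bound $1-e^{-x} \ge x(1-1/e)$ on $[0,1]$, valid since $p_\Delta T \le 1$, and note that random choice over the two decisions incurs expected regret exactly $T\Delta/2$. No gaps.
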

It is noted that Theorem~\ref{th:nonlinear_frequentist} is a parameter-dependent lower bound, where we consider a fixed parameter $\theta$ and we let the time horizon $T$ grow. From Theorem~\ref{th:nonlinear_frequentist}  we deduce Corollary \ref{cor:nonlinear_minimax} which is a lower bound on the minimax regret of TS. The minimax regret of TS scales at least as $\Omega(T^{1 - {1 \over d}})$, so that it is almost linear in high dimensions when $d$ is large. This also proves that, as long as the dimension $d$ is strictly greater than $2$, TS is not minimax optimal, since there exists algorithms such as CUCB whose minimax regret scales at most as $O({\bf poly}(d) \sqrt{T \ln T})$. This demonstrates that TS has a tendency to be too "greedy" which prevents it from exploring enough, and while this is not a problem in low dimensions, in high dimensions this matters a great deal, and causes it to perform much worse than optimistic algorithms. Corollary \ref{cor:nonlinear_minimax} is proven simply by letting $\Delta = {T^{-{1 \over d}}}$ in Theorem~\ref{th:nonlinear_frequentist} and the regret upper bound for CUCB follows directly from \cite{kveton2014tight}.

\begin{cor}\label{cor:nonlinear_minimax}
    Consider ${\cal F}$ the class of $1$-Lipschitz functions.
    
    The minimax regret of TS is lower bounded by:
    \begin{equation*}
      \max_{\theta \in [0,1]^d, f \in {\cal F}} R(T,\theta,f) \ge C_1 T^{1 - {1 \over d}},
    \end{equation*}
    with $C_1 > 0$ a universal constant, the minimax regret of CUCB is upper bounded by
    \begin{equation*}
        \max_{\theta \in [0,1]^d, f \in {\cal F}} R(T,\theta,f) \le C_1' d \sqrt{T \ln T},
    \end{equation*}
    where $C_1'$ is a universal constant. Hence TS is \emph{not} minimax optimal.
\end{cor}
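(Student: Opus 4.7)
The plan is to obtain the lower bound by exhibiting a single instance of the admissible class whose regret can be controlled with Theorem~\ref{th:nonlinear_frequentist}, and then to tune the gap $\Delta$ as a function of $T$ and $d$. Since the minimax regret over the class $\mathcal{F}$ of $1$-Lipschitz reward functions and over $\theta \in [0,1]^d$ dominates $R(T,\theta^\star,f^\star)$ for any particular admissible $(\theta^\star,f^\star)$, it suffices to produce one. I would take exactly the instance of Theorem~\ref{th:nonlinear_frequentist} with $m = d-1$, after checking $1$-Lipschitzness: on the optimal decision the reward is $f(x,\theta) = \prod_{i=1}^m \theta_i$, whose partial derivatives $\prod_{j\neq i}\theta_j$ lie in $[0,1]$ on $[0,1]^m$, so $f$ is $1$-Lipschitz coordinate-wise; on the other decision $f \equiv 1-\Delta$ is constant. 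Plugging $\theta = (1,\ldots,1)$ into Theorem~\ref{th:nonlinear_frequentist} gives
\begin{equation*}
R(T,\theta) \ge \frac{\Delta}{p_\Delta}\bigl(1-(1-p_\Delta)^T\bigr), \qquad p_\Delta = \frac{[\ln(1/(1-\Delta))]^m}{m\,m!}.
\end{equation*}

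Next I would tune $\Delta$. Using $(1-p_\Delta)^T \le e^{-p_\Delta T}$, the lower bound factors into two regimes: when $p_\Delta T \ge 1$, $R \ge (1-1/e)\Delta/p_\Delta$; when $p_\Delta T \le 1$, $R \ge \tfrac{1}{2}\Delta T$ (using $1-e^{-x} \ge x/2$ for $x\le 1$). For small $\Delta$ one has $\ln\bigl(1/(1-\Delta)\bigr) = \Delta(1+O(\Delta))$, hence $p_\Delta \asymp \Delta^m/(m\,m!)$. Following the paper's hint, I would set $\Delta = T^{-1/d}$; then $p_\Delta T$ is of order $T^{1-m/d}/(m\,m!)$, and straightforward algebra shows that in whichever of the two regimes we fall (depending on $T$ vs $(m\,m!)^d$), the lower bound is at least a constant multiple of $T\cdot T^{-1/d} = T^{1-1/d}$, yielding $\max_{\theta,f} R(T,\theta,f) \ge C_1 T^{1-1/d}$ for a universal $C_1 > 0$.

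For the matching upper bound on CUCB, I would simply invoke Theorem~6 of \cite{kveton2014tight}, which bounds the minimax regret of CUCB by $O\bigl(\sqrt{dmT\ln T}+dm\bigr)$, and observe that the combinatorial set we use here has $m \le d$ so this is at most $C_1' d\sqrt{T\ln T}$. Finally, non-minimax-optimality follows by comparing exponents: for any $d \ge 3$, $T^{1-1/d}$ eventually exceeds $d\sqrt{T\ln T}$, so TS cannot match CUCB's rate.

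The main obstacle is not the calculation itself but handling the boundary between the ``$p_\Delta T$ small'' and ``$p_\Delta T$ large'' regimes cleanly, i.e.\ showing that the choice $\Delta = T^{-1/d}$ does in fact sit in (or near) the crossover where the two lower bounds agree up to constants; this is the single place where I would have to be careful with the constants $m\,m!$ and with the approximation $\ln(1/(1-\Delta))\approx \Delta$, and it may force a slight tweak of $\Delta$ (e.g.\ $\Delta = (m\,m!)^{1/m} T^{-1/m}$) if one wants the sharpest constant rather than just the right exponent. Everything else is direct substitution into Theorem~\ref{th:nonlinear_frequentist} and a citation for CUCB.
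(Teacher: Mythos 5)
Your approach is the paper's approach: the paper disposes of this corollary in one line (``proven simply by letting $\Delta = T^{-1/d}$ in Theorem~\ref{th:nonlinear_frequentist}, and the CUCB bound follows from \cite{kveton2014tight}''), and you reconstruct exactly that, with the correct instance ($m=d-1$, reward $\prod_i \theta_i$ vs.\ constant $1-\Delta$), the correct reduction of the minimax regret to a single admissible pair $(\theta^\star,f^\star)$, and the right citation for CUCB. Your extra care about the two regimes is exactly where the difficulty lies, and you are right to single it out.

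However, your claim that ``in whichever of the two regimes we fall \dots the lower bound is at least a constant multiple of $T^{1-1/d}$'' does not survive the algebra in the regime $p_\Delta T\ge 1$. There the bound from Theorem~\ref{th:nonlinear_frequentist} is $\frac{\Delta}{p_\Delta}(1-e^{-1})\asymp \frac{m\,m!}{2^m}\,\Delta^{1-m} = \frac{m\,m!}{2^m}\,T^{(m-1)/d}=\frac{m\,m!}{2^m}\,T^{1-2/d}$, which dominates $C_1T^{1-1/d}$ only when $T^{1/d}\lesssim m\,m!$ --- i.e.\ precisely when you are \emph{not} deep in that regime. Even your proposed repair $\Delta\asymp(m\,m!/T)^{1/m}$, which balances the two terms, yields $\asymp (m\,m!)^{1/m}T^{1-1/m}\asymp m\,T^{1-1/(d-1)}$, a strictly smaller exponent than $1-1/d$ once $T$ exceeds roughly $m^{d(d-1)}$. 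So with the choice $\Delta=T^{-1/d}$ the stated rate is actually obtained only in the regime $p_\Delta T\lesssim 1$, i.e.\ $T\lesssim(m\,m!)^d$; beyond that the best Theorem~\ref{th:nonlinear_frequentist} delivers is $\Omega(m\,T^{1-1/(d-1)})$. This is a gap you share with the paper's own one-line justification --- and indeed the analogous linear statement, Corollary~\ref{co:linear_minimax}, is explicitly split into a $T\le 2^m m!$ regime (linear regret) and a $T>2^m m!$ regime (rate $T^{1-2/d}$, not $T^{1-1/d}$) for exactly this reason. A clean write-up of Corollary~\ref{cor:nonlinear_minimax} should carry the same case distinction rather than assert a single universal-constant bound for all $T$. (Two further small points: your Lipschitz check bounds $\|\nabla f\|_\infty$, which gives $1$-Lipschitzness in $\ell_1$ but only $\sqrt{m}$-Lipschitzness in $\ell_2$, and the CUCB bound $O(\sqrt{dmT\ln T}+dm)$ absorbs its additive $dm\le d^2$ term into $C_1'd\sqrt{T\ln T}$ only for $T\gtrsim d^2$; both are harmless but worth a sentence.)
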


\subsection{Non-Linear Combinatorial Bandits with Forced Exploration}

Our results above show that the regret of TS scales exponentially in the dimension since the expectation of the first time at which the optimal decision is selected can grow exponentially in the dimension. Therefore it is natural to assume that forcing some exploration initially would alleviate the problem.   Theorem~\ref{th:nonlinear_frequentist_forced} considers the same non linear bandit problem as that considered in Theorem~\ref{th:nonlinear_frequentist}, and shows that, while forced exploration does bring some improvement, for any fixed value of $\ell > 0$, the regret of TS with $\ell$ forced exploration rounds still scales exponentially in the dimension. The reason for this is that once again the first time at which the optimal decision is selected can be exponentially large, even with forced exploration. Upon closer inspection of Theorem~\ref{th:nonlinear_frequentist_forced}, one can see that, in order for the regret lower bound not to scale exponentially in the dimension one would require $1/ p^{\ell}_{\Delta}$ to grow at most polynomially in $d$, which in turn would require 
$
{\ell \over 2} \ln \left({1 \over 1 - \Delta} \right) \ge 1.
$
This indicates that, unless the gap $\Delta$ is known in advance (and in general $\Delta$ is of course unknown), it is not possible to select a value of $\ell$ that prevents the regret from scaling exponentially in the dimension. This suggests that some more complex modifications need to be made to TS in order to "fix" this exponential dependency on the dimension.

\begin{thm}\label{th:nonlinear_frequentist_forced}
Consider a non-linear combinatorial bandit problem with over combinatorial set 
    $
        \cX = \left\{ \sum_{i=1}^{m} e_i , e_{m+1} \right\}
    $
    with parameter $\theta = (1,...,1)$ and reward function  $f(x,\theta) = \prod_{i=1}^{m} \theta_i$ if $x =  \sum_{i=1}^{m} e_i$ and $f(x,\theta) = 1 - \Delta$ otherwise.

    Then the regret of TS with $\ell$ forced exploration rounds is lower bounded by
    \begin{equation*}
        R(T,\theta) \ge  {\Delta \over p^\ell_\Delta}(1 - (1-p^\ell_\Delta)^{T}) \text{ with }
        p^\ell_{\Delta} = {1 \over m m !} \left[\left(1+{\ell \over 2}\right) \ln \left({1 \over 1 - \Delta} \right)\right]^{m}.
    \end{equation*}
\end{thm}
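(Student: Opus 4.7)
The plan is to adapt the argument behind Theorem~\ref{th:nonlinear_frequentist}, tracking how the $\ell$ deterministic forced-exploration rounds modify the state entering the Thompson sampling phase. The key simplification is that $\theta=(1,\ldots,1)$ makes every Bernoulli outcome equal to $1$ almost surely, so $A_i(t)=N_i(t)$ and $B_i(t)=0$ throughout, and the posterior of $\theta_i$ is always $\mathrm{Beta}(N_i(t)+1,1)$. During forced exploration, since $x_A=\sum_{i=1}^m e_i$ samples coordinates $1,\ldots,m$ simultaneously while $x_B=e_{m+1}$ samples only coordinate $m+1$, the rule $i(t)\in\arg\min_i N_i(t)$ forces a strict alternation between $x_A$ and $x_B$, yielding $N_i(\ell+1)=\ell/2$ for $1\le i\le m$ (taking $\ell$ even for clarity) at the moment TS takes over.

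Next, let $\tau$ be the first time $t\ge\ell+1$ at which $x_A$ is selected, setting $\tau=+\infty$ otherwise. On $\{\tau>t\}$ only $x_B$ has been played since the end of forced exploration, so the counters for $i\le m$ remain frozen, and the Thompson samples $V_i(t)$ for $i\le m$ are conditionally independent $\mathrm{Beta}(\ell/2+1,1)$ across both $i$ and successive $t$ (the sample $V_{m+1}(t)$ is irrelevant to the comparison). TS selects $x_A$ iff $\prod_{i=1}^m V_i(t)\ge 1-\Delta$, and using the representation $V_i\stackrel{d}{=}U_i^{1/(\ell/2+1)}$ with $U_i\sim\mathrm{Unif}(0,1)$ this event translates to $\sum_{i=1}^m W_i\le a$ for $W_i\sim\mathrm{Exp}(1)$ i.i.d.\ with $a=(1+\ell/2)\ln(1/(1-\Delta))$, a $\mathrm{Gamma}(m,1)$ tail probability upper bounded by $p^\ell_\Delta$ by the same integral estimate used in Theorem~\ref{th:nonlinear_frequentist}. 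The only change from the unforced case is that the effective parameter $a$ is rescaled by the factor $1+\ell/2$ to reflect the $\ell/2$ deterministic successes already accumulated for each coordinate $i\le m$.

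The proof then closes by the standard geometric stopping-time argument: since the state is deterministic-frozen on $\{\tau>t\}$, the number of sub-optimal selections in $\{\ell+1,\ldots,T\}$ stochastically dominates $\min(\tau'-1,T-\ell)$ with $\tau'\sim\mathrm{Geom}(p^\ell_\Delta)$, and taking expectations yields the announced bound $\Delta(1-(1-p^\ell_\Delta)^T)/p^\ell_\Delta$, with any minor $O(\ell)$ corrections in the exponent absorbed for the stated regime. The main obstacle I anticipate is justifying cleanly that, on $\{\tau>t\}$, the Thompson samples at successive times can genuinely be treated as independent trials with the claimed Beta marginals: this relies on the state being deterministic-frozen, which in turn uses the degenerate observations $Z_i\equiv 1$ in an essential way. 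Once that measurability-and-independence step is formalised, the remainder is bookkeeping following the pattern of Theorem~\ref{th:nonlinear_frequentist} line for line.
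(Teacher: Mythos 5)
Your proposal follows essentially the same route as the paper's proof: on the event that the optimal decision has not yet been played after forced exploration, the posteriors of $\theta_1,\ldots,\theta_m$ are frozen at $\mathrm{Beta}(\ell/2+1,1)$, the per-round selection probability $\PP(\prod_{i=1}^m V_i\ge 1-\Delta)$ is bounded via the exponential/Erlang representation of $\ln(1/V_i)$ (this is exactly Lemma~\ref{lem:product_beta} with $\alpha=1+\ell/2$), and the geometric bound on $\tau$ combined with Remark~\ref{rem:regret_first_time} gives the result. Your additional care about the alternation during forced exploration and the conditional independence of successive Thompson samples on $\{\tau>t\}$ only makes explicit what the paper leaves implicit.
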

\section{Numerical Experiments}\label{sec:numericalexperiments}

We now illustrate the exponential regret of TS in practical settings using numerical experiments. Due to this exponential nature, some of those experiments involve a significant amount of computing time in high dimensions.  Due to limited space, we solely consider the linear case, which is the most often considered in the literature. Unless specified otherwise we use $1000$ independent sample paths for averaging, and $95\%$ confidence intervals are presented on the plots.

\paragraph{First selection of the optimal decision} As shown by our theoretical results, the first time that the optimal decision is selected
$
    \tau = \min \{ t \ge 1: x(t) = x^\star \}
$
can be exponentially large, and this is what causes exponential regret.
On Figure~\ref{fig:2dz_ECDF}, we present the c.d.f. (cumulative distribution function) of $\tau$ as a function of $m$ for combinatorial sets ${\cal X}^p$ and ${\cal X}^m$ introduced above. The parameter values are chosen as in the previous sections $\theta_i = 1$ if $1 \le i \le d/2$ and $\theta_i = {\Delta \over m}$ otherwise. For each sample path we generate $\tau$ by simulating TS until the optimal decision is played for the first time.

Some quantiles of $\tau$ indeed seem to increase exponentially as $m$ grows and for reasonable values of $m$, $\tau$ can be very large with appreciable probability, for instance on Fig. \ref{fig:2dz_ECDF} for $m = 14$, $\tau \ge 5.10^{4}$ with probability greater than $0.1$. Clearly, on sample paths where this happens, TS performs worse than random choice for the first $5.10^{4}$ time steps which is a surprisingly poor behaviour, especially on such a simple problem. This also showcases the fact that those sample paths happen relatively often. Thus the regret of TS is not only due to very rare occasions with high regret but also because of those poor behavior that can happen quite often.

\begin{figure}[ht]
    \centering
    \begin{tabular}{cc}
        \includegraphics[width = 0.4\linewidth]{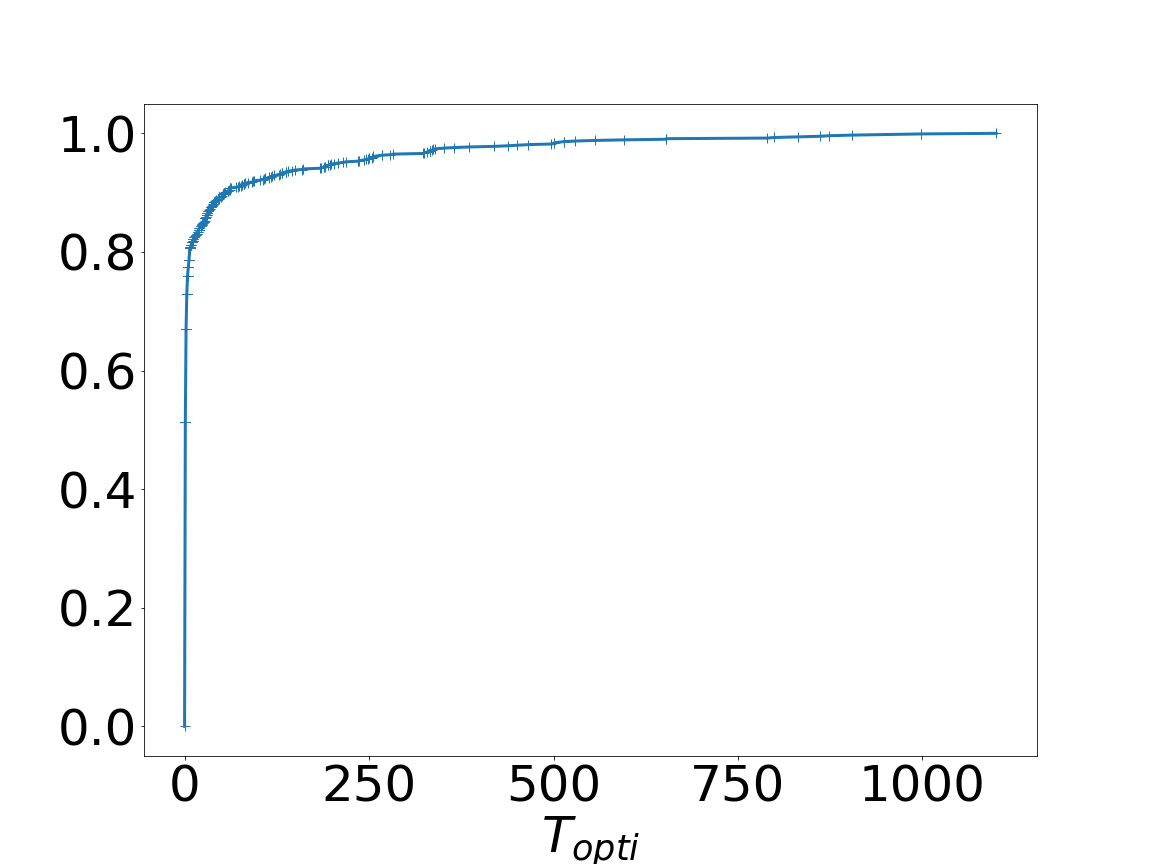} 
        &\includegraphics[width = 0.4\linewidth]{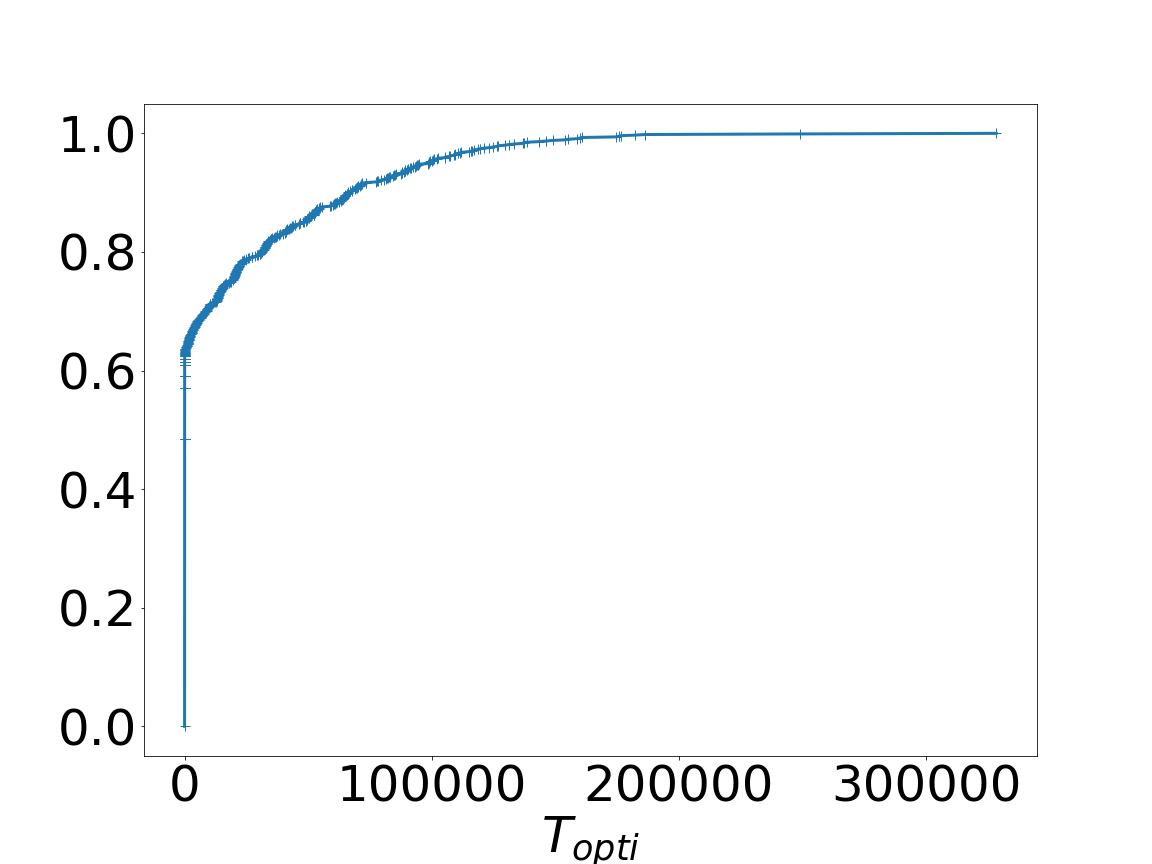}\\
        $m = 6,{\Delta \over m} = {1 \over 5}$ on ${\cal X}^p$ & $m = 14,{\Delta \over m} = {1 \over 5}$ on ${\cal X}^p$ \\
        \includegraphics[width = 0.4\linewidth]{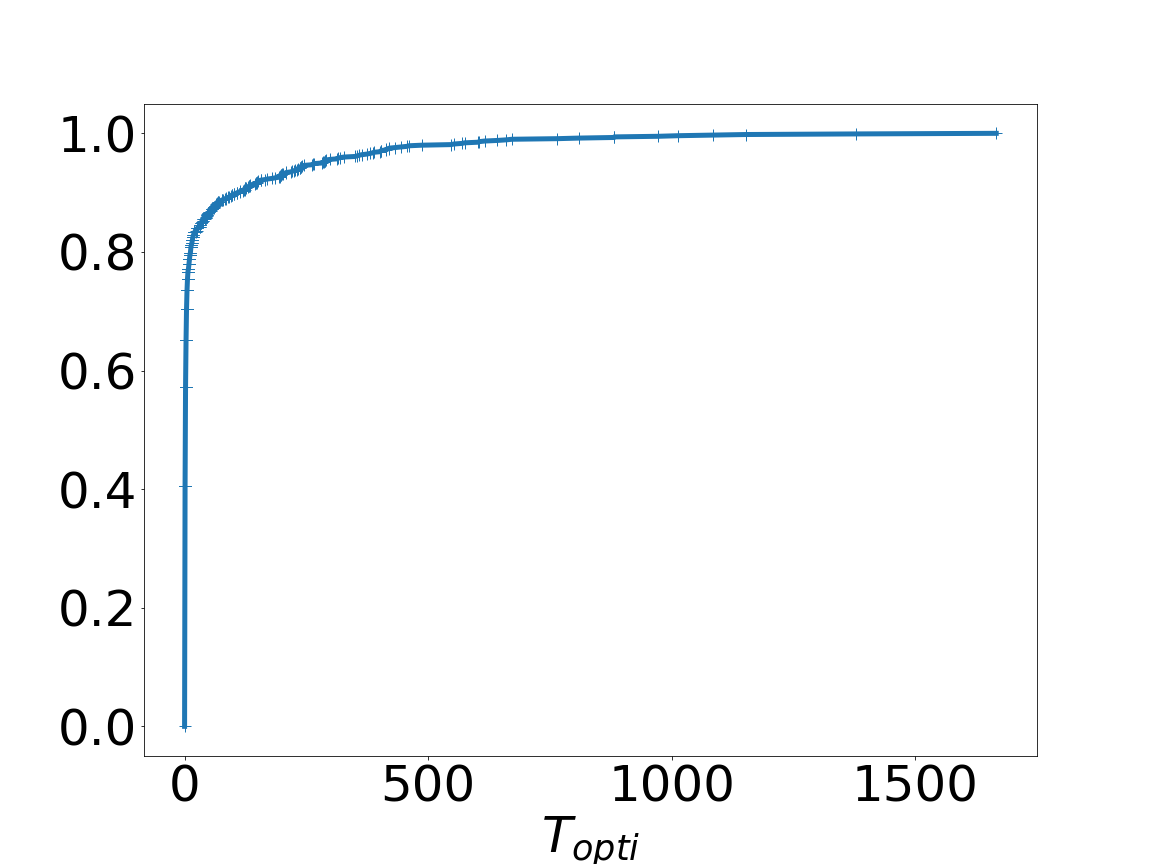} &\includegraphics[width = 0.4\linewidth]{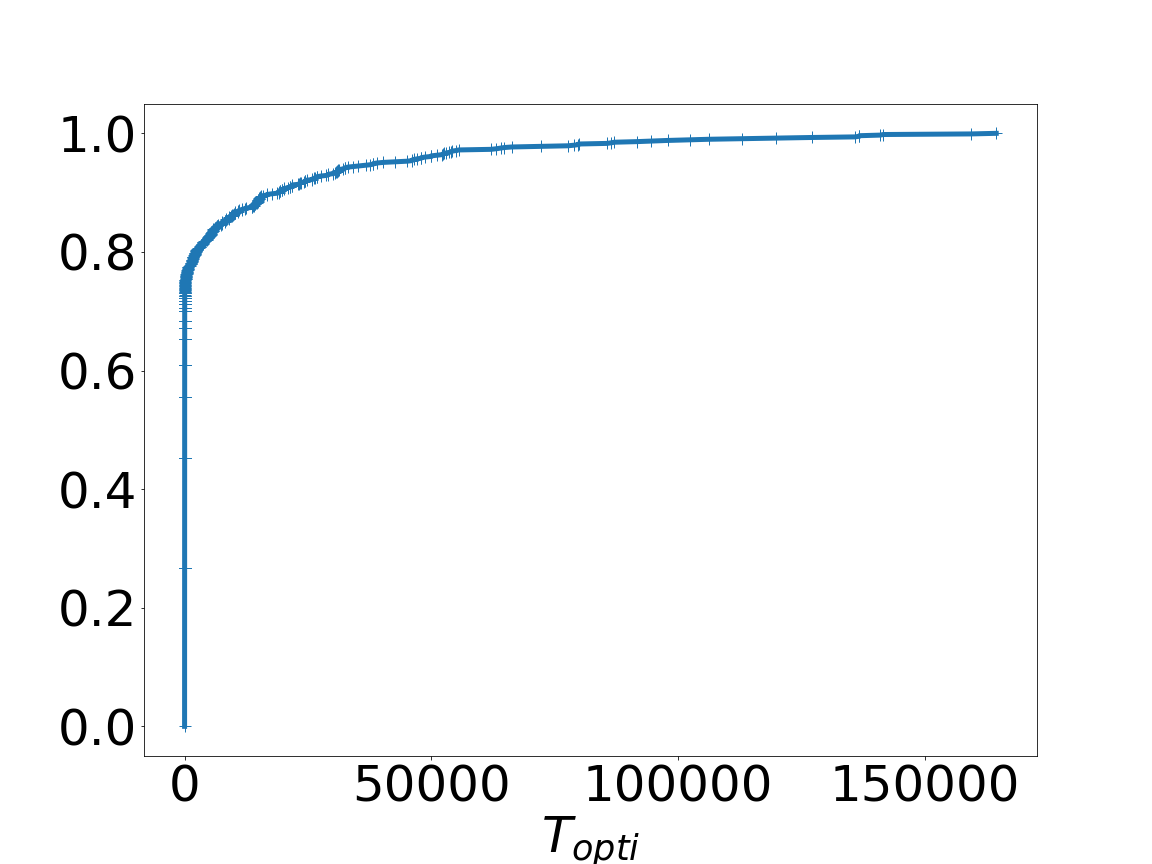}  \\
        $m=4, {\Delta \over m} = {1 \over 8}$ on ${\cal X}^m$ & $m=6,{\Delta \over m} = {1 \over 8}$ on ${\cal X}^m$
    \end{tabular}
    \caption{C.d.f. of the first time the optimal decision is played $\tau$ as a function of $m$ for set of paths and matchings ${\cal X}^p$, ${\cal X}^m$}
    \label{fig:2dz_ECDF}
\end{figure}
To investigate the impact of the gap $\Delta$, on Figure~\ref{fig:TS_first_opti_2decision} and Figure~ \ref{fig:TS_first_opti_zgraph}, we plot the expected first time the optimal decision is selected $\EE(\tau)$ as a function of $m$ for various $\delta = {\Delta \over m}$, for the sets of paths ${\cal X}^p$ and matchings ${\cal X}^m$. Once again we observe an exponential growth in both figures, and this growth is particularly fast for small values of $\delta$. When the gap gets small, the exponential growth of regret is exacerbated, leading to an even worse performance.

\begin{figure}[ht]%
    \centering
    \subfloat[Set of paths ${\cal X}^p$ combinatorial set]{\label{fig:TS_first_opti_2decision} \includegraphics[width = 0.45\linewidth]{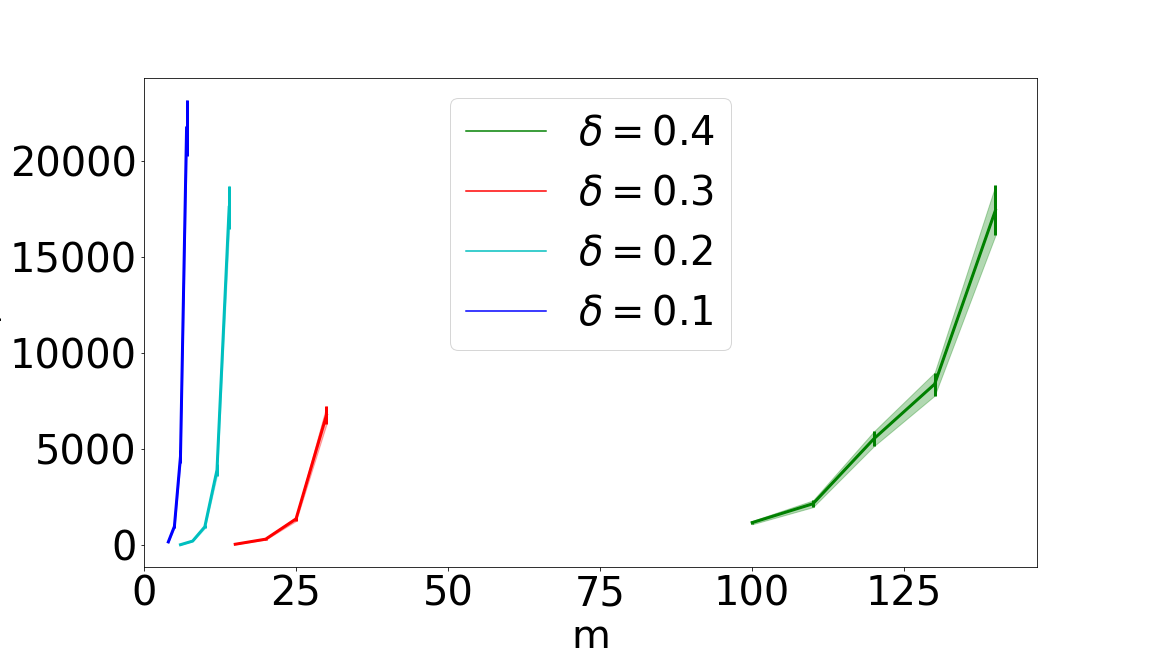}}%
    \subfloat[Set of matchings ${\cal X}^m$ combinatorial set]{\label{fig:TS_first_opti_zgraph}\includegraphics[width = 0.45\linewidth]{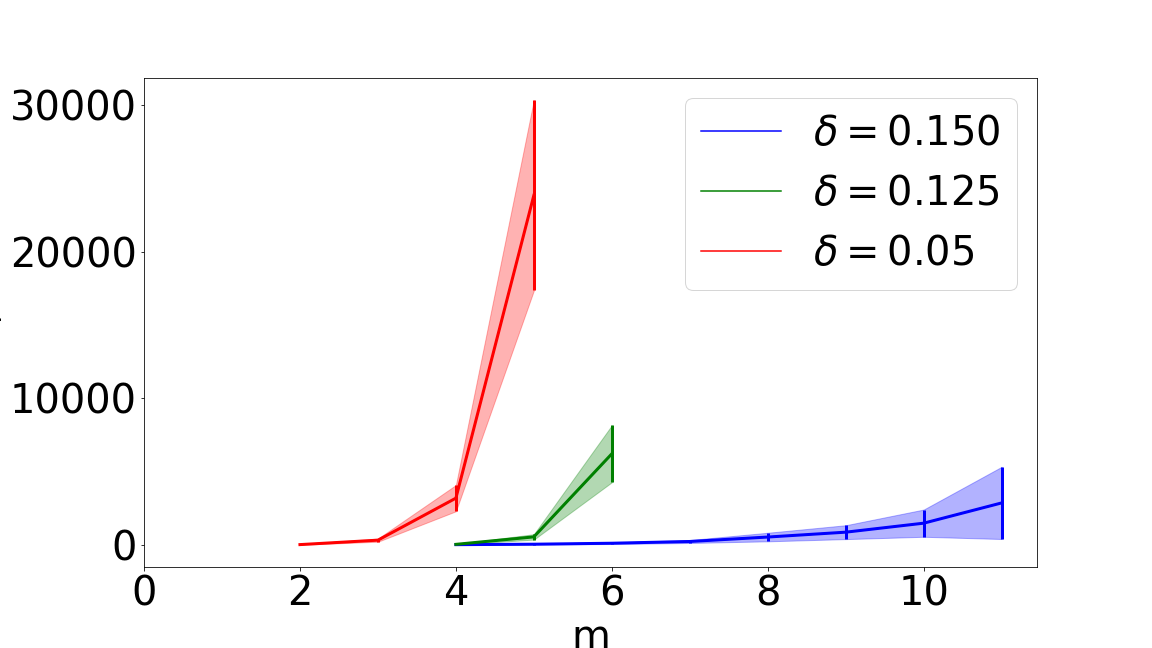} }%
    \caption{Expectation of the first time the optimal decision is played $\EE(\tau)$ as a function of $m$ and various values of ${\Delta \over m} = \delta$}
\end{figure}

\paragraph{Impact of forced exploration} We now investigate if forced exploration alleviates the problem in practice, and consider $\ell$ forced exploration rounds. On figures \ref{fig:TS_first_opti_2decision_fe} and \ref{fig:TS_first_opti_2decision_fe_varl} we plot the expected first time the optimal decision is selected $\EE(\tau)$ as a function of $m$ for various $\delta = {\Delta \over m}$ and $\ell$, for the sets of paths ${\cal X}^p$. As predicted by Theorem \ref{th:linear_frequentist_forced}, $\EE(\tau)$ still seems to scale exponentially in $m$ which causes exponential regret.

Theorem~\ref{th:linear_frequentist_forced} states that if $\ell$ is chosen such that ${\Delta \over m} < {2 \over \ell + 1}$ then regret scales exponentially. On the other hand one could think that when $\ell$ is chosen large enough to violate this condition then regret does not grow as rapidly. Figure~\ref{fig:TS_first_opti_2decision_fe_varl} shows that, at least numerically, this does not appear to be the case, indeed,  we choose $\ell = m$, and there are values of $\delta$ such that the regret still seem to scale exponentially in $m$.

 \begin{figure}[ht]%
    \centering
    \subfloat[Set of paths ${\cal X}^p$ combinatorial set]{\label{fig:TS_first_opti_2decision_fe}\includegraphics[width = 0.45\linewidth]{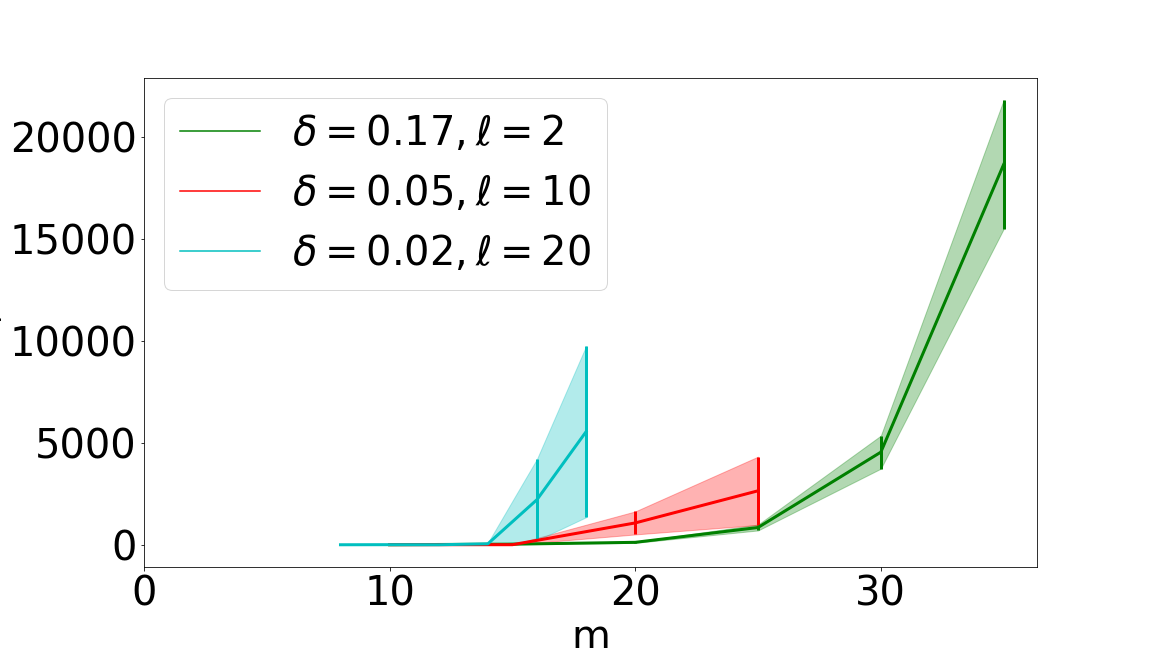} }%
    \subfloat[Set of matchings ${\cal X}^m$ combinatorial set ${\Delta \over m} = {1\over100}$ and $\ell=m$]{\label{fig:TS_first_opti_2decision_fe_varl}\includegraphics[width = 0.45\linewidth]{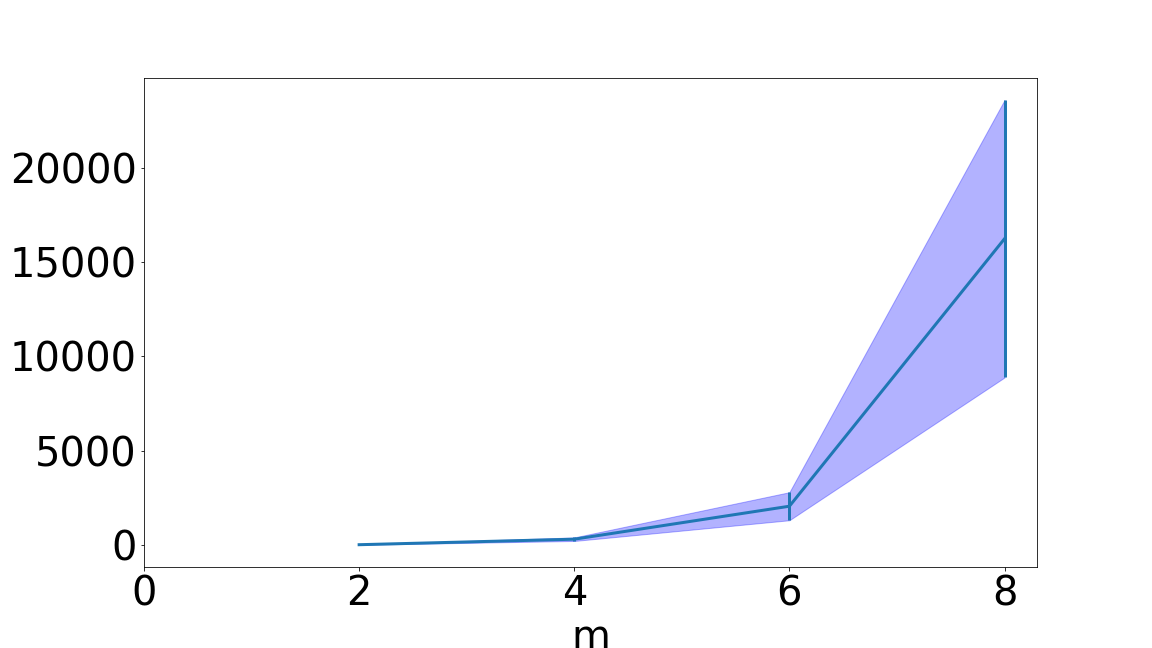} }%
    \caption{Expectation of the first time the optimal decision is played $\EE(\tau)$ as a function of $m$ and various values of ${\Delta \over m} = \delta$ and $\ell$}
    \label{fig:MFL}%
\end{figure}

\paragraph{Comparison with optimistic algorithms} We now compare TS with the state-of-the-art frequentist algorithms ESCB and/or CUCB. These experiments are averaged over $40$ sample paths due to computational limits. On  figures~\ref{fig:Regret_comparison_2d} and~\ref{fig:Regret_comparison_z} we present the regret as a function of $m$ for the set of paths ${\cal X}^p$, ${\Delta \over m} = 0.1$, $T = 4.10^4$ and the set of matchings ${\cal X}^m$, ${\Delta \over m} = 0.05$, $T = 4.10^4$ respectively. The results show that the regret of TS is larger than that of CUCB and/or ESCB by several orders of magnitude in high dimensions, as predicted by our theoretical results. In fact the regret of TS is so overwhelmingly large that, due to the scale of the figure, it looks like the regret of ESCB and/or CUCB does not increase with the dimension (this is of course not the case). On figures \ref{fig:TS_regret_2decision_delta_var} and \ref{fig:TS_regret_zgraph_delta_var} we perform similar experiments but with smaller gaps, ${\Delta \over m} = {1 \over m}$ and the same behaviour arises.

 \begin{figure}[ht]%
    \centering
    \subfloat[$\delta = 0.1$ and $T = 4.10^4$, Set of paths ${\cal X}^p$]{\label{fig:Regret_comparison_2d}\includegraphics[width = 0.45\linewidth]{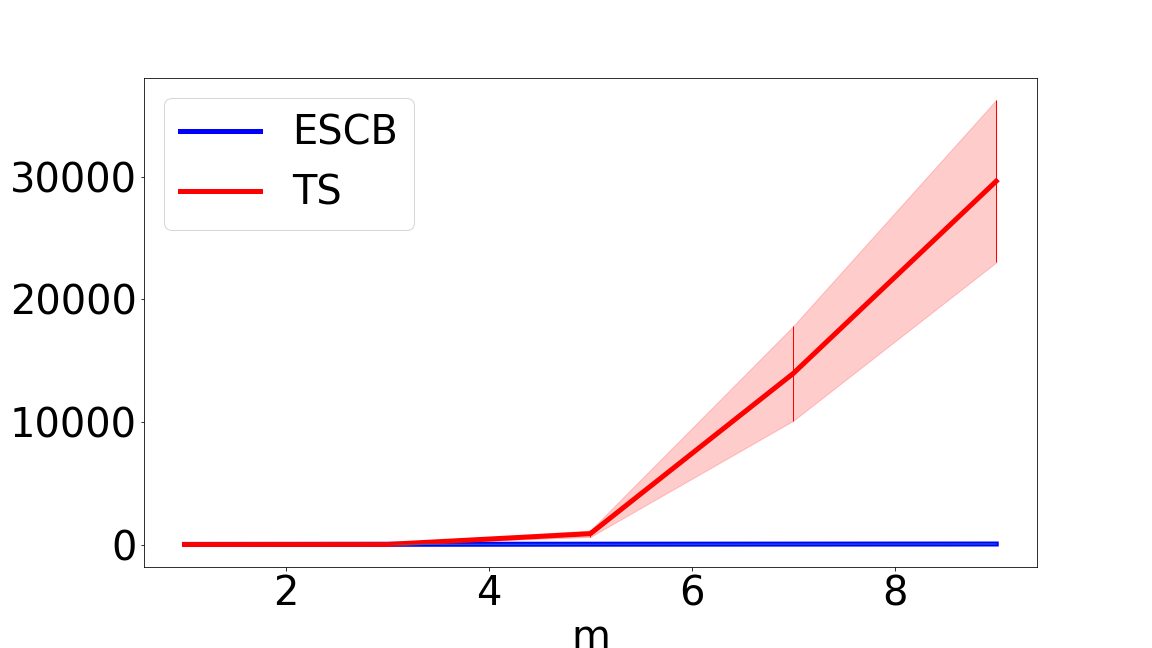} }%
    \subfloat[$\delta = 0.05$ and $T = 10^5$, set of matchings ${\cal X}^m$]{\label{fig:Regret_comparison_z}\includegraphics[width = 0.45\linewidth]{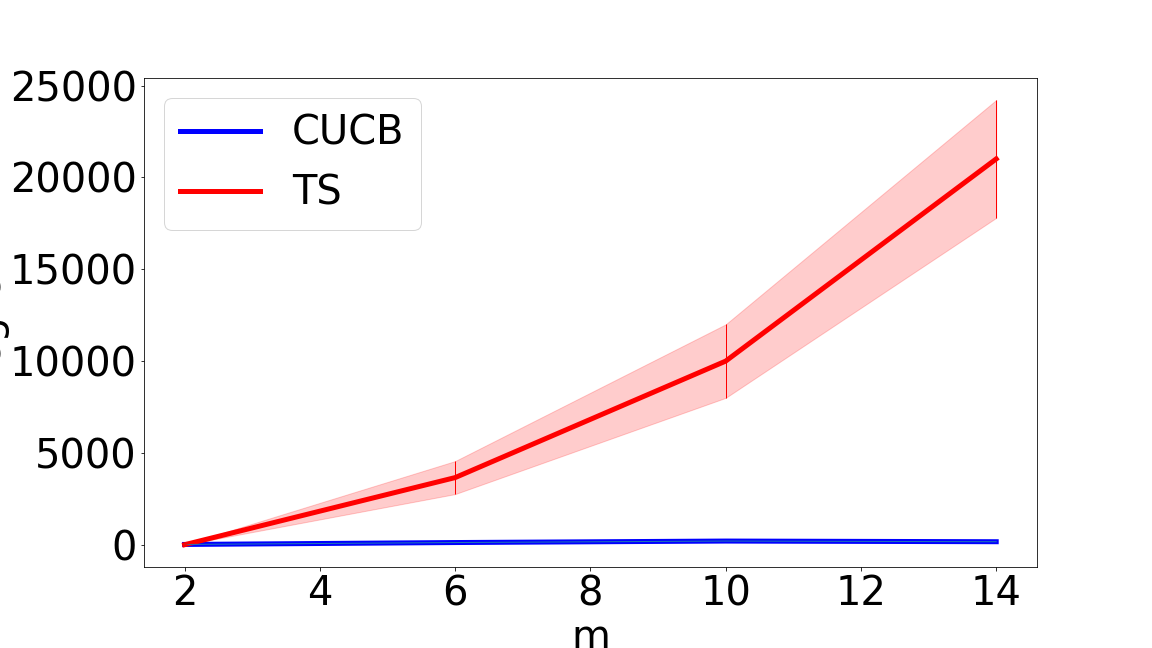} }%
    \caption{Regret comparison between ESCB and TS for set of paths ${\cal X}^p$ and matching ${\cal X}^m$ (Averaged over 40 experiences)}
\end{figure}
\begin{figure}[h!]%
    \centering
    \subfloat[$\Delta = 1$ and $T = 4.10^5$, set of paths ${\cal X}^p$]{\label{fig:TS_regret_2decision_delta_var}\includegraphics[width = 0.45\linewidth]{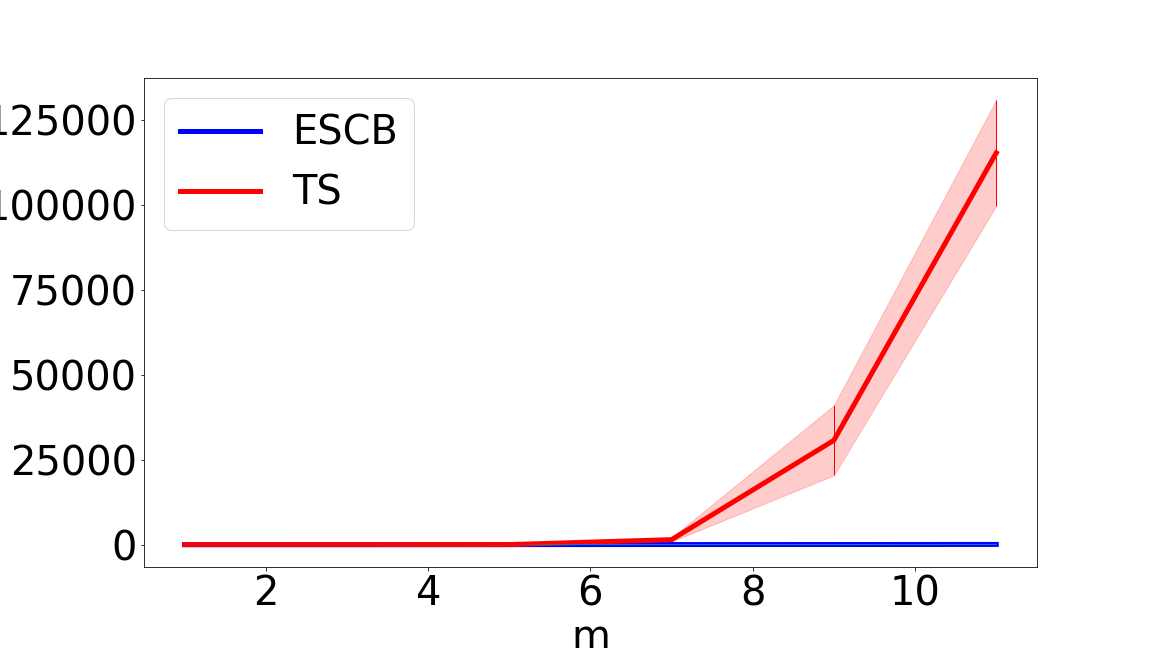} }%
    \subfloat[$\Delta = 1$ and $T = 10^5$, set of matchings ${\cal X}^m$]{\label{fig:TS_regret_zgraph_delta_var}\includegraphics[width = 0.45\linewidth]{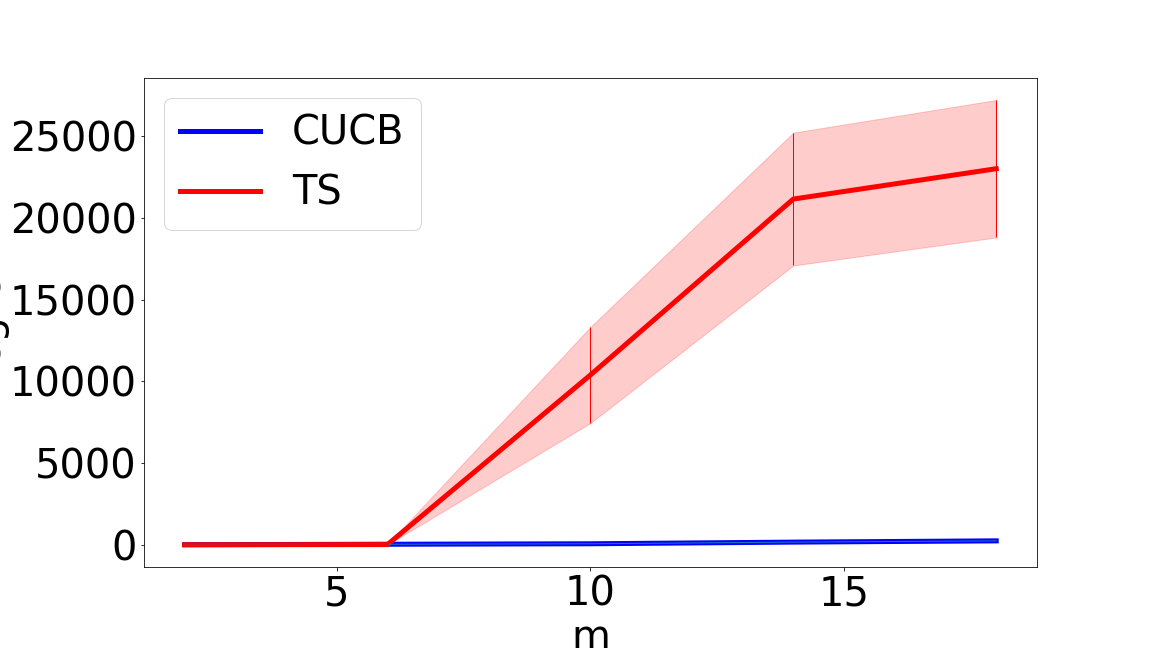} }%
    \caption{Regret comparison between ESCB and TS for set of paths ${\cal X}^p$ and matching ${\cal X}^m$ For a fixed $\Delta$ (Averaged over 20 and 40 experiences)}
\end{figure}

\section{Conclusion}\label{sec:conclusion}
We have shown through both theoretical analysis as well as numerical experiments that TS can perform very poorly in high dimensions, both for both linear and non linear problems, and for various combinatorial structures such as sets of paths and matchings (one could consider more complex combinatorial set including multiple non disjoint paths). Introducing forced exploration does not alleviate the problem either. Therefore, this is not an artifact, but rather a general problem. In essence, Thompson performs poorly because it has a tendency to play much too greedily, and in high dimensions this sometimes leads to a complete lack of exploration and missing the optimal arm. Our work points towards a new challenging open problem which is to design better TS-like algorithms for regret minimization that can deal with high-dimensional problems, while retaining the computational efficiency of TS. Two reasonable ideas to explore would be (i) carefully designing the prior distribution (ii) enforcing forced explorations at regular intervals, possibly in an adaptive manner.  Also, our work concerns the Bernoulli setting but we believe that our results can be generalized to bounded distributions. It is not obvious whether or not our results still hold for Gaussian distributions, and seems like an interesting open problem.
\bibliography{references}
\bibliographystyle{plain}
\appendix
\onecolumn

\section{Linear Bandits: Regret Upper Bound for ESCB} \label{sec:escb}

We first recall a regret upper bound for ESCB found in \cite{cuvelier2020}, based on the more general analysis of \cite{degenne2016}.

\begin{thm}\label{th:escb}
Consider a linear combinatorial bandit problem. 

Then the regret of ESCB is upper bounded by:
$$
R(T,\theta) \le C(m) + {2 d m^3 \over \Delta_{\min}^2} +  {24 d ( \ln T + 4 m \ln \ln T) \over \Delta_{\min}}  \left\lceil {\ln m \over 1.61} \right\rceil^2
,$$
with $C(m)$ a positive number that depends solely on $m$.
\end{thm}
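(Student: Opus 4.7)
The plan is to derive this bound by instantiating the general regret analysis of ESCB from \cite{degenne2016} (Theorem 2) with the particular choice of confidence index used in \cite{cuvelier2020}. Since the statement is essentially a repackaging of an existing result, the role of the proof is to verify that the constants and functional form match, rather than to build the analysis from scratch. The overall structure I would follow is the standard template for optimistic algorithms: decompose the regret according to the suboptimal decisions played, control the probability that the optimistic index of the optimal arm is below the true optimal reward, and control the number of times any suboptimal decision can be played while its index exceeds that of $x^\star$.

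First, I would recall the ESCB index, which for a decision $x$ takes the form $x^\top \hat\theta(t) + c_t(x)$, where $c_t(x)$ is a confidence bonus built from the $N_i(t)$ and which, crucially, exploits the fact that $Z_1(t),\dots,Z_d(t)$ are independent Bernoulli variables. This independence is what produces the $\lceil \ln m / 1.61 \rceil^2$ factor in place of the $m$ factor one would get from a naive Hoeffding/CUCB-type analysis (see \cite{degenne2016}, Lemma 3, which uses a peeling/union argument over dyadic subsets of coordinates). Then I would split the regret contribution of each suboptimal $x$ into (a) rounds where the index of $x^\star$ underestimates the truth, yielding the $C(m)$ constant term plus a $2dm^3/\Delta_{\min}^2$ burn-in contribution from the regime where empirical means have not yet concentrated, and (b) rounds where the empirical plus-bonus estimate of $x$ exceeds the truth by $\Delta_x$, yielding the main $O(d \ln T / \Delta_{\min})$ term with the $\lceil \ln m / 1.61 \rceil^2$ prefactor.

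The key technical steps in order would be: (i) invoke the KL-based self-normalized concentration for Bernoulli observations to bound the probability that $x^{\star \top}\hat\theta(t) + c_t(x^\star) < x^{\star\top}\theta$, giving a logarithmic-in-$T$ contribution with the extra $4m \ln \ln T$ term arising from a standard peeling over $N_i(t)$ values; (ii) on the favorable event, use the inequality $\Delta_x \le 2 c_t(x)$ to translate index-based arguments into counts $N_i(t)$ that must have occurred; (iii) sum over coordinates using the independence-exploiting bound from \cite{degenne2016} to aggregate across the $m$ items in any decision. Finally, I would assemble the three contributions into the stated form, absorbing constants into $C(m)$.

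The main obstacle is not conceptual but bookkeeping: matching the explicit numerical constants ($24$, the $1.61 \approx \ln 5$ coming from the peeling ratio, the factor $2$ in $2dm^3/\Delta_{\min}^2$) requires carefully tracking each union bound and concentration inequality in the chain. A secondary subtlety is that $C(m)$ collects lower-order contributions (initialization, second-order terms from the peeling, and small-horizon effects) which are not made explicit in \cite{cuvelier2020} and which I would simply quote as a finite quantity depending only on $m$. Since the full computation is long but entirely routine given \cite{degenne2016}, I would keep the proof at the level of citing the relevant lemmas and then performing the bookkeeping, rather than reproducing the concentration arguments in full.
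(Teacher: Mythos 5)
The paper gives no proof of this statement: it is recalled verbatim as a known result from \cite{cuvelier2020}, itself derived from the general ESCB analysis of \cite{degenne2016}, and Appendix~\ref{sec:escb} simply cites those works. Your plan --- instantiating the general ESCB regret theorem of \cite{degenne2016} with the confidence index of \cite{cuvelier2020} and tracking the constants --- is exactly the provenance the paper relies on, so your approach is consistent with (and more detailed than) what the paper actually does.
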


\section{Proofs}\label{sec:proofs}

\subsection{Technical Results}

We state a technical result about the product of i.i.d. random variables with Beta distribution.

\begin{lem}\label{lem:product_beta}
Let $V_1,...,V_m$ i.i.d. with distribution $V_i \sim  \text{Beta}(\alpha,1)$.  Then for all $\Delta \in [0,1]$:
\begin{equation*} 
\mathbb{P}\left(\prod_{i=1}^m V_i \ge 1-\Delta \right) \leq {\alpha^m \over m (m !)} \left[\ln \left({1 \over 1 - \Delta} \right)\right]^{m}.
\end{equation*}
\end{lem}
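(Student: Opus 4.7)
The plan is to reduce the product event to a sum event via the standard logarithmic change of variable, turning the question into one about a Gamma CDF that can be bounded in an elementary way.

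First, I would use the fact that if $V \sim \mathrm{Beta}(\alpha,1)$ its CDF on $[0,1]$ is $F_V(v) = v^\alpha$, so $W := -\ln V$ satisfies $\PP(W \ge w) = \PP(V \le e^{-w}) = e^{-\alpha w}$, i.e.\ $W \sim \mathrm{Exp}(\alpha)$. Applying this independently to each $V_i$ and setting $c := \ln\!\bigl(1/(1-\Delta)\bigr) \ge 0$, the event $\{\prod_{i=1}^m V_i \ge 1-\Delta\}$ becomes $\{\sum_{i=1}^m W_i \le c\}$, where $W_1,\dots,W_m$ are i.i.d.\ $\mathrm{Exp}(\alpha)$. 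This collapses the problem to a one-dimensional Gamma tail estimate.

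Next, the sum $S := \sum_{i=1}^m W_i$ follows $\mathrm{Gamma}(m,\alpha)$ with density $f_S(w) = \tfrac{\alpha^m}{(m-1)!}\, w^{m-1} e^{-\alpha w}$ on $[0,\infty)$, so
\[
\PP(S \le c) \;=\; \int_0^{c}\frac{\alpha^m}{(m-1)!}\, w^{m-1} e^{-\alpha w}\, dw \;\le\; \frac{\alpha^m}{(m-1)!}\int_0^{c} w^{m-1}\, dw \;=\; \frac{\alpha^m\, c^{m}}{m!},
\]
where the inequality simply drops the factor $e^{-\alpha w}\le 1$ on $[0,c]$. This immediately gives an upper bound of the shape claimed in the lemma, namely a constant times $\alpha^m [\ln(1/(1-\Delta))]^m / m!$.

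I do not anticipate any substantive obstacle: the logarithmic transformation is the only non-mechanical step, and the remaining work is a one-line integral estimate. The only delicate point is matching the exact combinatorial prefactor stated in the lemma; sharpening the crude bound $e^{-\alpha w}\le 1$ (for instance by using the Poisson-sum identity $\PP(\mathrm{Gamma}(m,\alpha)\le c) = \sum_{k\ge m} e^{-\alpha c}(\alpha c)^k/k!$ and isolating the leading $k=m$ term, or by a careful integration by parts) should produce the prefactor written in the statement.
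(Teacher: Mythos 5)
Your proposal is exactly the paper's argument: take logarithms so that $-\ln V_i \sim \mathrm{Exp}(\alpha)$, note the sum is $\mathrm{Gamma}(m,\alpha)$ (Erlang), and bound the CDF integral by dropping $e^{-\alpha w}\le 1$. One remark on the "delicate point" you flag: do not spend effort trying to sharpen your bound $\alpha^m c^m/m!$ down to the stated $\alpha^m c^m/(m\,m!)$, because the stated prefactor is not attainable. Your computation is the correct one: the Erlang$(m,\alpha)$ density is $\alpha^m w^{m-1}e^{-\alpha w}/(m-1)!$, so the integral estimate yields $\frac{\alpha^m}{(m-1)!}\cdot\frac{c^m}{m}=\frac{\alpha^m c^m}{m!}$; the paper's own proof writes the density normalization as $\alpha^m/m!$ instead of $\alpha^m/(m-1)!$, which is where its extra factor $1/m$ comes from. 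Indeed, as $c\to 0$ the probability is asymptotically $\alpha^m c^m/m!$, so the lemma's claimed constant is violated for small $\Delta$ when $m\ge 2$. The correct bound is the one you derived (it only weakens the downstream regret lower bounds by a factor $m$, which does not affect the exponential conclusions).
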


\begin{proof}\label{proof:product_beta}
Taking logarithms:
\begin{align*}
     \mathbb{P}\left(\prod_{i=1}^m V_i \ge 1-\Delta\right) = \mathbb{P}\left(\sum_{i=1}^m \ln {1 \over V_i} \le  \ln \left({1 \over 1 - \Delta} \right) \right) 
\end{align*}
Now if $V_i \sim  \text{Beta}(\alpha,1)$ then $\ln {1 \over V_i} \sim  \text{Exp}(\alpha)$ and since $V_i$ are i.i.d. we have 
\begin{equation*} \sum_{i=1}^m \ln {1 \over V_i} \sim \text{Erlang}\left(m,\alpha \right).
\end{equation*} 
Therefore:
\begin{align*} \mathbb{P}\left(\sum_{i=1}^m \ln {1 \over V_i} \le  \ln \left({1 \over 1 - \Delta} \right) \right) 
&=  \frac{\alpha^m}{m!}\int_0^{\ln \left({1 \over 1 - \Delta} \right)}x^{m-1}e^{- \alpha x}dx\\
 &\le  \frac{\alpha^m}{m!}\int_0^{\ln \left({1 \over 1 - \Delta} \right)}x^{m-1}dx\\
     \\
     &=  \frac{\alpha^m}{(m)!m}\left[\ln \left({1 \over 1 - \Delta} \right)\right]^{m}.
\end{align*}
which concludes the proof.
\end{proof}

We state another technical result about the Beta distribution near $1$.
\begin{lem}\label{lem:beta_tail_bound}
Consider $V \sim \text{Beta}(\alpha+1,\beta+1)$ with $\alpha, \beta > 0$. Define $T = \alpha + \beta$ and $M = {\alpha \over \alpha + \beta}$.

For all $c \in ]0,1[$
\begin{equation*}
\PP(V \leq c) \leq \frac{e^{\frac{1}{12}}}{\sqrt{2 \pi T M(1-M)}} \int_{0}^{c} e^{-T D(M \mid x)} d x,
\end{equation*}
with $D$ the Kullback-Leibler divergence between Bernoulli distributions:
\begin{equation*}
D(M \mid x)=M \ln \frac{M}{x}+(1-M) \ln \frac{1-M}{1-x}.
\end{equation*}
We also have the simpler bound for $c \leq M$ :
\begin{equation*}
\PP(V \leq c) \leq \frac{e^{\frac{1}{12}} \sqrt{T}}{\sqrt{2 \pi}} e^{ -T(M-c)^{2}}.
\end{equation*}
\end{lem}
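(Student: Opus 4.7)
The plan is to express the density of $V\sim\text{Beta}(\alpha+1,\beta+1)$ explicitly, factor out its value at the mode $M$, and then bound the resulting normalizing constant via Stirling's approximation. Writing $\alpha = TM$ and $\beta = T(1-M)$, a direct algebraic manipulation gives the pointwise identity
\begin{equation*}
x^{\alpha}(1-x)^{\beta} = M^{\alpha}(1-M)^{\beta}\,\exp\!\bigl\{-T\bigl[M\ln(M/x)+(1-M)\ln((1-M)/(1-x))\bigr]\bigr\} = M^{\alpha}(1-M)^{\beta}\,e^{-T D(M\mid x)}.
\end{equation*}
Hence the density can be rewritten as $f(x)=f(M)\,e^{-TD(M\mid x)}$, where $f(M)=\Gamma(T+2)\,M^{\alpha}(1-M)^{\beta}/[\Gamma(\alpha+1)\Gamma(\beta+1)]$ is a purely deterministic normalizing constant, and integration over $[0,c]$ reduces the first claim to an upper bound on $f(M)$.

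For that bound I would apply the explicit two-sided Stirling estimate $\sqrt{2\pi z}\,(z/e)^{z}\leq \Gamma(z+1)\leq \sqrt{2\pi z}\,(z/e)^{z}e^{1/(12z)}$: the upper side on $\Gamma(T+2)=(T+1)\Gamma(T+1)$, and the lower side on $\Gamma(\alpha+1)$ and $\Gamma(\beta+1)$. The power terms then collapse via the identity $T^{T}/(\alpha^{\alpha}\beta^{\beta})=1/[M^{\alpha}(1-M)^{\beta}]$, so the $M^{\alpha}(1-M)^{\beta}$ factor in $f(M)$ cancels exactly; and since $\alpha+\beta=T$ the exponentials $e^{-T}$, $e^{-\alpha}$, $e^{-\beta}$ also cancel. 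The remaining square-root factor simplifies using $\alpha\beta=T^{2}M(1-M)$, while the only Stirling slack is bounded by $e^{1/(12T)}\leq e^{1/12}$, producing the prefactor claimed in the first inequality.

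For the simpler second bound, valid when $c\leq M$, I would combine the first inequality with Pinsker's inequality $D(M\mid x)\geq 2(M-x)^{2}$, so that $\int_{0}^{c}e^{-TD(M\mid x)}dx$ is dominated by a Gaussian integral. Extending the integration to $(-\infty,c]$ and substituting $u=\sqrt{2T}(M-x)$ reduces this to $\frac{1}{\sqrt{2T}}\int_{\sqrt{2T}(M-c)}^{\infty}e^{-u^{2}}du$, which is controlled by the classical tail estimate $\int_{a}^{\infty}e^{-u^{2}}du\leq (\sqrt{\pi}/2)e^{-a^{2}}$. Combining this with the $\sqrt{T}$-scaling of the prefactor and, if necessary, the trivial relaxation $M(1-M)\leq 1/4$ yields the advertised exponential tail bound.

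The main obstacle is the Stirling bookkeeping in the second step: one must carefully match the $(T+1)$ arising from $\Gamma(T+2)=(T+1)\Gamma(T+1)$ with the $\sqrt{\alpha\beta}=T\sqrt{M(1-M)}$ in the denominator so that all the power and exponential factors cancel and the only surviving multiplicative slack is the single Stirling remainder $e^{1/12}$. Once the density has been rewritten as $f(M)e^{-TD(M\mid x)}$ and the normalizing constant has been controlled, the rest of the lemma follows by a routine application of Pinsker and a standard Gaussian tail bound.
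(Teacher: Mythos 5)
Your strategy is the same as the paper's: factor the density as a normalizing constant times $e^{-T D(M\mid x)}$ via the identity $x^{\alpha}(1-x)^{\beta}=M^{\alpha}(1-M)^{\beta}e^{-TD(M\mid x)}$, control the constant with two-sided Stirling bounds, integrate over $[0,c]$, and then use Pinsker for the second bound. The one substantive point where you diverge — carrying the exact normalizer $\Gamma(T+2)/[\Gamma(\alpha+1)\Gamma(\beta+1)]$ — is precisely where your bookkeeping does not close as claimed. Running Stirling honestly (upper bound on $\Gamma(T+2)=(T+1)\Gamma(T+1)$, lower bounds on $\Gamma(\alpha+1)$, $\Gamma(\beta+1)$), the power and exponential factors cancel exactly as you say, but the $(T+1)$ is \emph{not} absorbed by $\sqrt{\alpha\beta}=T\sqrt{M(1-M)}$; what survives is
\begin{equation*}
f(M)\;\le\;\frac{(T+1)\,e^{1/(12T)}}{\sqrt{2\pi\,TM(1-M)}},
\end{equation*}
i.e.\ a prefactor larger than the advertised one by essentially a factor $T+1$. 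This is not a slip you can patch: the inequality as stated in the lemma actually fails for small $T$. Take $\alpha=\beta=1$, so $V\sim\text{Beta}(2,2)$, $T=2$, $M=1/2$ and $e^{-2D(1/2\mid x)}=4x(1-x)$; at $c=1/2$ the claimed bound reads $\tfrac12\le \frac{e^{1/12}}{3\sqrt{\pi}}\approx 0.20$. The paper's own proof arrives at the stated constant only because it writes the $\text{Beta}(\alpha+1,\beta+1)$ density with normalizer $(\alpha+\beta)!/(\alpha!\,\beta!)$ rather than $(\alpha+\beta+1)!/(\alpha!\,\beta!)$; that expression integrates to $1/(T+1)$, which is exactly the missing factor. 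So the correct conclusion of your (more careful) computation is the lemma with an extra $(T+1)$ in the prefactor; downstream this is harmless, since in the applications the polynomial factor is swallowed by the exponential and only shifts the universal constants in the definition of $T_0$.

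Two smaller remarks on the second bound. Your route — Pinsker followed by a genuine Gaussian tail integral after extending to $(-\infty,c]$ — is finer than the paper's, which merely bounds the integrand by its value at $c$ and the length of $[0,c]$ by $1$; both recover the exponent $-2T(M-c)^2$, which is stronger than the stated $-T(M-c)^2$. However, to trade the factor $1/\sqrt{2\pi TM(1-M)}$ for $\sqrt{T}/\sqrt{2\pi}$ you need a \emph{lower} bound $M(1-M)=\alpha\beta/T^{2}\ge 1/T^{2}$ (valid when $\alpha\beta\ge 1$, as in the paper's use where $\alpha,\beta$ are integer counts, but not for all $\alpha,\beta>0$); the relaxation $M(1-M)\le 1/4$ that you invoke goes in the wrong direction and buys you nothing here.
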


\begin{rem}{\label{rem:concentration_beta_mode}} This proves that for any $\nu \in (0,1]$
\begin{equation*} \mathbb{P}\left(V \leq M-\sqrt{\frac{1}{T} \ln \left(\frac{e^{1 / 12} \sqrt{T}}{\nu \sqrt{2 \pi}}\right)}\right) \leq \nu . \end{equation*}
\end{rem}

\begin{rem}{\label{rem:concentration_multi_beta_mode}} Consider $V_{i} \sim \text{Beta}(\alpha_i+1, \beta_i+1)$ with $V_{1}, \ldots, V_{m}$ independent, by negation and union bound we have for all $c \in[0,1]:$
\begin{equation*}
\PP\left(\sum_{i=1}^{m} V_{i} \leq \sum_{i=1}^m c_i \right) \leq \sum_{i=1}^{m} \mathbb{P}\left(V_{i} \leq c_i \right).
\end{equation*}
so that the above bound easily extends to the multidimensional case:
\begin{equation*}
\PP\left(\sum_{i=1}^{m} V_{i} \leq \sum_i M_i-\sqrt{\frac{m^2}{T} \ln \left(\frac{e^{1 / 12} m \sqrt{T}}{\nu \sqrt{2 \pi}}\right)}\right) \leq \nu.
\end{equation*}
\end{rem}

\begin{proof}\label{proof:beta_tail_bound}
The density of $V$ is given by:
\begin{equation*}
f(x)=\frac{(\alpha+\beta) !}{\alpha ! \beta !} x^{\alpha}(1-x)^{\beta}.
\end{equation*}
The Stirling approximation yields for all $n$ (see \cite{robbins_remark_1955})
\begin{equation*}
\sqrt{2 \pi} n^{n+1 / 2} \leq \sqrt{2 \pi} n^{n+1 / 2} e^{\frac{1}{12 \pi+1}} \leq n ! \leq \sqrt{2 \pi} n^{n+1 / 2} e^{\frac{1}{5 \pi}} \leq \sqrt{2 \pi} n^{n+1 / 2} e^{\frac{1}{12}}.
\end{equation*}
Therefore:
\begin{align*}
     \frac{(\alpha+\beta) !}{\alpha ! \beta !}&\leq \frac{\sqrt{2 \pi}(\alpha+\beta)^{\alpha+\beta+1 / 2} e^{\frac{1}{12}}}{(2 \pi)(\alpha)^{\alpha+1 / 2}(\beta)^{\beta+1 / 2}} \\\\
      &= \frac{T^{T+1 / 2} e^{\frac{1}{12}}}{\sqrt{2 \pi}(T M)^{T M+1 / 2}(T(1-M))^{T(1-M)+1 / 2}} \\\\
      &= \frac{e^{\frac{1}{12}}}{\sqrt{2 \pi T M(1-M)}\left[M^{M}(1-M)^{1-M}\right]^{T}}.
\end{align*}
Furthermore:
\begin{align*}
     x^{\alpha}(1-x)^{\beta}&=  e^{\alpha \ln (x)+\beta \ln (1-x)} \\\\
      &=  e^{T[M \ln (x)+(1-M) \ln (1-x)]} \\\\
      &= M^{T M}(1-M)^{(1-M) T} e^{-T D(M \mid x)}.
\end{align*}
Replacing:
\begin{equation*}
f(x) \leq \frac{e^{\frac{1}{12}} e^{-T D(M \mid x)}}{\sqrt{2 \pi T M(1-M)}}.
\end{equation*}
Therefore:
\begin{equation*}
\mathbb{P}(V \leq c) = \int_{0}^c f(x) dx  \leq \frac{e^{\frac{1}{12}}}{\sqrt{2 \pi T M(1-M)}} \int_{0}^{c} e^{-T D(M \mid x)} d x.
\end{equation*}
The simpler bound comes from $T M(1-M)=\frac{\alpha \beta}{T} \geq \frac{1}{T}$ and using Pinsker's inequality $D(x \mid M) \geq$ $2 (x-M)^{2},$ for $c \leq M$
\begin{equation*}
\mathbb{P}(V \leq c) \leq \frac{e^{\frac{1}{12}} \sqrt{T}}{\sqrt{2 \pi}} e^{ -2 T(M-c)^{2}}.
\end{equation*}
\end{proof}

We recall a result on the Irwin-Hall distribution.
\begin{rem}\label{rem:irwinhall}
    Consider $U_1,...,U_m$ i.i.d. uniformly distributed in $[0,1]$. Then their sum follows the Irwin-Hall distribution and for any $\Delta \le 1$ we have that:
    \begin{equation*}
        \PP\Big(\sum_{i=1}^m U_i \ge m - \Delta\Big) = \PP\Big(\sum_{i=1}^m U_i \le \Delta\Big) = {\Delta^m \over m!}.
    \end{equation*}
\end{rem}

We present a technical result on the tail behaviour of the sum of beta random variables.
\begin{lem}\label{lem:betatail}
Consider $V_1,...,V_m$ independent random variables following beta laws of parameters $(\alpha_1,\beta_1),...,(\alpha_m,\beta_m)$. For $\epsilon < 1$ we have that :
\begin{equation*}\mathbb{P}\Big(\sum_{i=1}^m V_i \geq m-\epsilon\Big) \leq  
\frac{\epsilon^{\sum_{i=1}^m \beta_i}}{m! \prod_{i=1}^m B(\alpha_i,\beta_i)}.
\end{equation*}

where $B(\alpha,\beta) = \Gamma(\alpha) \Gamma(\beta) /  \Gamma(\alpha+\beta)$ is the beta function.
\end{lem}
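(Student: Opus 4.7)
The plan is to convert the tail event near $m$ into a near-zero event for the complementary variables $W_i = 1 - V_i$, and then crudely bound the resulting integral by the volume of a small simplex. The key observation is that on the relevant region every $W_i$ is small, so each factor of the Beta density can be replaced by a simple pointwise constant.

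First I would perform the change of variables $W_i = 1 - V_i$, under which the event $\{\sum_i V_i \geq m - \epsilon\}$ becomes the simplex $S_\epsilon = \{w \in [0,1]^m : w_i \geq 0,\ \sum_i w_i \leq \epsilon\}$. Writing the joint density of $(V_1, \dots, V_m)$ as the product of the independent Beta densities and applying the change of variables, the probability of interest becomes
\begin{equation*}
\int_{S_\epsilon} \prod_{i=1}^m \frac{(1-w_i)^{\alpha_i - 1} w_i^{\beta_i - 1}}{B(\alpha_i, \beta_i)} \, dw.
\end{equation*}

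Next I would bound the integrand pointwise on $S_\epsilon$. Under the implicit hypothesis $\alpha_i, \beta_i \geq 1$ (which always holds in the applications to Thompson Sampling, since the Beta parameters are of the form $A_i + 1$ and $B_i + 1$ with $A_i, B_i \geq 0$), on $S_\epsilon$ one has $1 - w_i \leq 1$ and $w_i \leq \epsilon < 1$, so $(1-w_i)^{\alpha_i-1} \leq 1$ and $w_i^{\beta_i-1} \leq \epsilon^{\beta_i-1}$. Factoring these constants out of the integral reduces the problem to computing $\operatorname{Vol}(S_\epsilon)$.

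The final step is the classical simplex volume identity $\operatorname{Vol}(S_\epsilon) = \epsilon^m / m!$, which follows immediately from the rescaling $w = \epsilon u$ sending $S_\epsilon$ to the standard unit simplex. Combining the exponents yields $\epsilon^{\sum_i (\beta_i - 1)} \cdot \epsilon^m / m! = \epsilon^{\sum_i \beta_i}/m!$, and dividing by $\prod_i B(\alpha_i, \beta_i)$ recovers exactly the claimed bound.

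No step is genuinely hard; the only subtlety is tracking the hypothesis $\alpha_i, \beta_i \geq 1$ that legitimizes the pointwise majorization. A sharper estimate is available via the full Dirichlet integral $\int_{S_\epsilon} \prod_i w_i^{\beta_i-1}\, dw = \epsilon^{\sum_i \beta_i} \prod_i \Gamma(\beta_i) / \Gamma(\sum_i \beta_i + 1)$, but this refinement is unnecessary: the crude $1/m!$ from the simplex volume already matches the form of the stated bound and is the factor that matters for the downstream use in Theorem~\ref{th:linear_frequentist_bis}.
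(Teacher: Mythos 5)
Your proof is correct and is essentially the paper's own argument: the paper works directly on the region $A_\epsilon=\{u:\sum_i u_i\ge m-\epsilon\}$ (where each $1-u_i\le\epsilon$), majorizes the density pointwise by $\prod_i \epsilon^{\beta_i-1}/B(\alpha_i,\beta_i)$, and identifies the remaining integral with the Irwin--Hall tail $\epsilon^m/m!$, which is exactly your simplex-volume step after the cosmetic substitution $w_i=1-v_i$. Your explicit flagging of the hypothesis $\alpha_i,\beta_i\ge 1$ (needed for the pointwise bounds, implicit in the paper, and satisfied in all applications since the parameters are $A_i+1,B_i+1$) is a welcome clarification but not a different method.
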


\begin{proof}\label{proof:betatail}
We define $A_\epsilon \triangleq  \{(u_1,...,u_m)\in [0,1]^m, m-\epsilon \leq \sum_{i=1}^m u_i \leq m\}$. It is noted that if $(u_1,...,u_m) \in A_\epsilon$ we have that $u_i \geq 1- \epsilon$ for all $i$.  We recall that the probability density of a Beta$(\alpha_i,\beta_i)$ law is $p_i(u) = u^{\alpha_i-1}(1-u)^{\beta_i-1} / B(\alpha_i,\beta_i)$.

We have 
\begin{align*}
    \mathbb{P}\Big(\sum_{i=1}^m V_i \geq m-\epsilon\Big)&=  \int_{A_\epsilon} \Pi_{i=1}^m p_i(u_i) du_1...du_m
    \\
    \\
     &=  \int_{A_\epsilon} \Pi_{i=1}^m \frac{ u_i^{\alpha_i-1}(1-u_i)^{\beta_i-1}}{B(\alpha_i,\beta_i)} du_1...du_m 
     \\
     \\
     &\leq  \int_{A_\epsilon} \Pi_{i=1}^m \frac{ \epsilon^{\beta_i-1}}{B(\alpha_i,\beta_i)} du_1...du_m
     \\
     \\
     & =   \frac{\epsilon^{\sum_{i=1}^m(\beta_i-1)}}{\prod_{i=1}^m B(\alpha_i,\beta_i)}  \int_{A_\epsilon}1du_1...u_m.
\end{align*}

But we know that the integral $\int_{A_t}1du_1...u_m$ corresponds to the cumulative distribution function of the sum of $m$ uniform random variables in $[0,1]$. This is known as the Irving Hall distribution. So we have that $\int_{A_t}1du_1...u_m$ = $\frac{\epsilon^m}{m!}$

Which proves the announced result \begin{equation*}\mathbb{P}\Big(\sum_{i=1}^m V_i \geq m-\epsilon\Big) \leq  
\frac{\epsilon^{\sum_{i=1}^m \beta_i}}{m! \prod_{i=1}^m B(\alpha_i,\beta_i)}.
\end{equation*}
\end{proof}

Finally we make an important remark about the link between regret and the first time the optimal decision is selected.

\begin{rem}\label{rem:regret_first_time}
    Define $\tau$ the first time the optimal decision is selected. Then we have that:
    \begin{equation*}
        R(T,\theta) =  \EE(\sum_{t=1}^T \Delta_{x(t)}) \ge \Delta_{\min} \EE(\sum_{t=1}^T \indic\{\Delta_{x(t)} \ne 0 \} ) \ge \Delta_{\min} \sum_{t=1}^T  \PP(\tau \ge t).
    \end{equation*}
\end{rem}

\subsection{Proof of Theorem~\ref{th:linear_frequentist}}
Define $b = 1 - {\Delta \over m}$. Consider $\epsilon > 0$ such that $b -\epsilon \ge {1 \over 2}$ and denote the two decisions as $x^{1} = (1,...,1,0,...,0)$ and $x^{2} = (0,...,0,1,...,1)$. Consider the event where the empirical mean of decision $x^2$ does not deviate too much from its expectation when it is selected: 
\begin{equation*}
    {\cal A} = \left\{ \exists t \ge 0: x(t) = x^2 , \sum_{i=m+1}^d {A_i(t) \over N_i(t)} \le (b-\epsilon) m  \right\}.
\end{equation*}
We decompose ${\cal A}$ as $\cup_{n \ge 1} {\cal A}_n$ where
\begin{equation*}
{\cal A}_n = \left\{ \exists t \ge 0: x(t) = x^2, N_i(t) = n, i=m+1,...,d, {1 \over n} \sum_{i=m+1}^d A_i(t) \le (b-\epsilon) m \right\}.
\end{equation*}
Using Hoeffding's inequality we have that:
\begin{equation*}
    \PP({\cal A}) \le \sum_{n \ge 1} \PP({\cal A}_n) \le \sum_{n \ge 1} \exp(-2 mn \epsilon^2)   = {\exp(-2 m \epsilon^2) \over 1 - \exp(-2 m \epsilon^2)}.
\end{equation*}
where we have used the fact that if $N_i(t) = n$ for $i=m+1,...,d$ then $\sum_{i=m+1}^d A_i(t)$ is a sum of $m n$ i.i.d. Bernoulli variables with parameter $b$. Let us control the probability that decision $x^1$ is never selected between time $0$ and time $t$, which is the probability of event:
\begin{equation*}
{\cal B}_t = \{ x(s) = x^2 : s=1,...,t \}.
\end{equation*}
Let us assume that ${\cal B}_{t}$ occurs and ${\cal A}$ does not occur. Since decisions $x^{1}$ and $x^2$ have been selected $0$ and $t$ times respectively, the probability of selecting $x^2$ is lower bounded by:
\begin{equation*}
    \PP( {\cal B}_{t+1} | {\cal B}_t , \bar{\cal A} ) \ge \PP( \sum_{i=1}^m V_i(t) \le \sum_{i=m+1}^d V_i(t) | {\cal B}_t , \bar{\cal A} ).
\end{equation*}
where $V_1(t),...,V_d(t)$ are independent, distributed in $[0,1]$. For $i=1,...,m$, $V_i(t)$ is uniformly distributed in $[0,1]$ and has mean $1/2$. For $i=m+1,...,d$, $V_i(t)$ has Beta$(A_i(t) + 1,t - A_i(t) + 1)$ distribution with mean ${A_i(t) + 1 \over t + 2}$ so that expectations verify:
\begin{align*}
  \sum_{i=m+1}^d \EE( V_i(t) |  {\cal B}_t , \bar{\cal A}) - \sum_{i=1}^m \EE( V_i(t) | {\cal B}_t , \bar{\cal A}) &= \sum_{i=m+1}^d {A_i(t) + 1 \over t + 2} - \sum_{i=1}^m {1 \over 2}\\
  &\ge {t m  (b-\epsilon) + m \over t + 2} - {m \over 2} \\ 
  &={m t  (b-\epsilon -1/2) \over t + 2} \\
  &\ge {m (b-\epsilon -1/2) \over 3},
\end{align*}
since $\sum_{i=m+1}^d A_i(t) \ge t m (b-\epsilon)$.

Using Hoeffding's inequality once again we have:
\begin{align*}
    \PP\Big( \sum_{i=1}^m V_i(t) \ge \sum_{i=m+1}^d V_i(t) |  {\cal B}_t , \bar{\cal A} \Big) &= \PP\Big( \sum_{i=1}^m V_i(t) - \sum_{i=m+1}^d V_i(t) \ge 0 |  {\cal B}_t , \bar{\cal A} \Big)  \\
    &\le \exp\{- 2 m (b-\epsilon -1/2)^2 / 9   \} \\
    &\equiv p_\Delta.
\end{align*}
We have proven that for all $t > 1$:
\begin{align*}
    \PP( {\cal B}_{t+1} | {\cal B}_t , \bar{\cal A} ) \ge 1 - p_\Delta,
\end{align*}
and since $\PP({\cal B}_{1}|\bar{\cal A}) = 1/2$:
\begin{align*}
    \PP( {\cal B}_{t} ) \ge \PP( {\cal B}_{t}, \bar{\cal A}) = \PP(\bar{\cal A}) \PP({\cal B}_{t} | \bar{\cal A}) \ge \PP(\bar{\cal A})\PP({\cal B}_{1}|\bar{\cal A}) (1 - p_\Delta)^{t-1} = \frac{\PP(\bar{\cal A})}{2}(1 - p_\Delta)^{t-1}.
\end{align*}
Denote by $\tau$ the first time that $x^1$ is selected. If ${\cal B}_t$ occurs then $\tau \ge t$ and using Remark \ref{rem:regret_first_time} yields the lower bound:
\begin{align*}
    R(T,\theta) \ge \Delta \sum_{t=1}^T \PP(\tau \ge t) \ge  \frac{\Delta \PP(\bar{\cal A})}{2} \sum_{t=1}^T (1 - p_\Delta)^{t-1}.
\end{align*}
Setting $\epsilon = {1 \over \sqrt{m}}$ we get that \begin{equation*} \PP({\cal A}) \le {e^{-2} \over 1 - e^{-2}} \le  {1 \over 2},\end{equation*}
and we get the announced result:
\begin{align*}
    R(T,\theta) \ge {\Delta \over 4} \sum_{t=1}^T (1 - p_\Delta)^t.
\end{align*}

\subsection{Proof of Theorem~\ref{th:linear_frequentist_bis}}
\label{proof:linear_frequentist_bis}

Consider $m \ge 5$. We denote by $N^1(t)$ and $N^2(t)$ the number of times that decisions $x^1$ and $x^2$ have been respectively selected, and it is noted that $N_i(t) = N^1(t)$ for $i=1,...,m$ and $N_i(t) = N^2(t)$ for $i=m+1,...,d$.  Consider the event where the empirical mean of decision $x^2$ deviates significantly from its expectation when it is selected: 
\begin{equation*}
  {\cal A} = \left\{ \exists t \ge 0: x(t) = x^2 ,{1 \over N^2(t)} \sum_{i=m+1}^d A_i(t) \le m  - \Delta -\sqrt{m \ln (2 N^2(t)) \over N^2(t)}  \right\}.
\end{equation*}
We decompose ${\cal A}$ as $\cup_{n \ge 1} {\cal A}_n$ where
\begin{equation*}
{\cal A}_n = \left\{ \exists t \ge 0: x(t) = x^2, N^2(t) = n, {1 \over n} \sum_{i=m+1}^d A_i(t) \le m  - \Delta  - \sqrt{m \ln (2 n) \over n} \right\}.
\end{equation*}
Using Hoeffding's inequality we have that :
\begin{equation*}
    \PP({\cal A}) \le \sum_{n \ge 1} \PP({\cal A}_n) \le \sum_{n \ge 1} {1 \over (2n)^{2}} =  {\pi^2 \over 24} \le {1 \over 2},
\end{equation*}
where we have used the fact that if $N^2(t) = n$ then $\sum_{i=m+1}^d A_i(t)$ is a sum of $m n$ i.i.d. Bernoulli random variables with parameter $1-{\Delta \over m}$. Let us control the probability that decision $x^1$ is never selected between time $0$ and time $t$, which is the probability of event:
\begin{equation*}
{\cal B}_t = \{ x(s) = x^2 : s=1,...,t \}.
\end{equation*}
We have that:
\begin{align*}
\PP( {\cal B}_{t+1} | {\cal B}_t , \bar{\cal A})  &\ge \PP\Big( \sum_{i=1}^m V_i(t) \le \sum_{i=m+1}^d V_i(t) | {\cal B}_t , \bar{\cal A} \Big) \ge (1-p_{t,1}) (1- p_{t,2}),
\end{align*}
with
\begin{align*}
p_{t,1} &= \PP\left(  \sum_{i=1}^m V_i(t) \ge m  - \Delta  - h(m,t) | {\cal B}_t , \bar{\cal A} \right), \\
p_{t,2} &= \PP\left(  \sum_{i=m+1}^d V_i(t) \le m  - \Delta  -  h(m,t) | {\cal B}_t , \bar{\cal A} \right), \\
	h(m,t) &= \sqrt{m \ln (2 t) \over t} + \sqrt{\frac{m^2}{t} \ln \left(\frac{e^{1 / 12} m \sqrt{t}}{ \frac{1}{t^2}\sqrt{2 \pi}} \right)}.
\end{align*}
It is noted that there exists a universal constant $C_1 > 0$ such that
\begin{equation*}
	h(m,t) \le  \sqrt{C_1 m^2(\ln t + \ln m) \over t}.
\end{equation*}
Let us define $T_0 = C_0 m^2 \ln m$ with $C_0 > 0$ a universal constant such that the five following inequalities are true:
\begin{itemize}
\item $T_0 \ge m,$
\item $h(m,t) \le {1 \over 3}$ for all $t \ge T_0,$
\item ${C_1 8 e^2 \ln t \over t} \le 1$ for all $t \ge T_0,$
\item $\sum_{t = T_0}^{+\infty} \left({C_1 8 e^{2} \ln t \over t}\right)^{{3 \over 2}} \le {1 \over 3}.$
\item $\sum^{+\infty}_{t=T_0} {1 \over t^2} \le {1 \over 2}.$
\end{itemize}
Consider $p_{t,2}$, and recall that for $i=1,...,d$
\begin{equation*} 
M_i(t) \triangleq \frac{A_i(t)}{A_i(t) +B_i(t) } = \frac{A_i(t)}{N_i(t) }.
\end{equation*}
is the mode of $V_i(t)$. If event $\bar{\cal A}$ occurs then 
\begin{equation*}\sum_{i=m+1}^d M_i(t) > m  - \Delta  - \sqrt{m \ln (2 N^2(t)) \over N^2(t)}.\end{equation*}
So using lemma \ref{lem:beta_tail_bound} and remark \ref{rem:concentration_multi_beta_mode} we have that:
\begin{equation*}
p_{t,2} \le \frac{1}{t^2}.
\end{equation*}

Consider $p_{t,1}$. Since $\Delta \le \frac{1}{6}$ and $h(m,t) \le {1 \over 3}$ we have 
\begin{equation*}m  - \Delta  - h(m,t) \ge m- {1 \over 2} \ge m-1.\end{equation*}
If event ${\cal B}_t$ occurs, then $A_i(t) = B_i(t) = 0$ for all $i=1,...,m$ therefore $\sum_{i=1}^{m} V_i(t)$ follows the Irwin-Hall distribution of size $m$, so from remark ~\ref{rem:irwinhall}, for $t \ge T_0$ we have:
\begin{align*}
    p_{t,1} &= {1 \over m!}\left(\Delta  + h(m,t)\right)^m \le {1 \over m!} \left((2\Delta)^m  + (2h(m,t))^m\right),
\end{align*}
where we used the convexity inequality $({x+y \over 2})^m \le {x^m + y^m \over 2}$. 

We have, for $T > T_0$ : 
\begin{align*}
     {\PP( {\cal B}_{T} | \bar{\cal A}) \over  \PP( {\cal B}_{T_0} | \bar{\cal A})} & =  \prod^{T-1}_{t=T_0} \PP({\cal B}_{t+1}|{\cal B}_{t}, \bar{\cal A})  \geq \prod^{T-1}_{t=T_0} (1-p_{t,1})(1-p_{t,2}).\\
\end{align*}
Using the union bound and the definition of $T_0$:
\begin{align*}
    \prod^{T-1}_{t=T_0} (1-p_{t,1}) \ge 1 - \sum^{T-1}_{t=T_0} p_{t,1} \ge 1 - \sum^{T-1}_{t=T_0} {1 \over t^2} \ge 1 - \sum^{+\infty}_{t=T_0} {1 \over t^2} \ge {1 \over 2}.
\end{align*}
Now:
\begin{align*}
	1 - p_{t,2} &= 1 - {(2 \Delta)^m \over m!} - {(2 h(m,t))^m \over m!} \\
	&= (1 - {(2 \Delta)^m \over m!}) {1 - {(2 \Delta)^m \over m!} - {(2 h(m,t))^m \over m!} \over 1 - {(2 \Delta)^m \over m!}} \\
	&\ge (1 - {(2 \Delta)^m \over m!})(1 - {3 \over 2} {(2 h(m,t))^m \over m!} ),
\end{align*}
where we used the fact that $\Delta \le {1 \over 6}$ so that ${(2 \Delta)^m \over m!} \le {1 \over 3}$.

Using the union bound once more:
\begin{align*}
	    \prod^{T-1}_{t=T_0} (1-p_{t,2}) &\ge \prod^{T-1}_{t=T_0} (1 - {(2 \Delta)^m \over m!}) (1 - {3 \over 2} {(2 h(m,t))^m \over m!} ) \\
	    &\ge (1 - {(2 \Delta)^m \over m!})^{T - T_0}  (1 - {3 \over 2} \sum^{T-1}_{t=T_0} {(2 h(m,t))^m \over m!} ) \\
	   &\ge (1 - {(2 \Delta)^m \over m!})^{T - T_0} (1 - {3 \over 2} \sum^{\infty}_{t=T_0} {(2 h(m,t))^m \over m!} ) .
\end{align*}
We turn to the last sum in the right hand side of the equation above. Since $t \ge T_0 \ge m$ we have
\begin{equation*}
	h(m,t) \le \sqrt{C_1 m^2(\ln t + \ln m) \over t} \le \sqrt{C_1 2 m^2 \ln t \over t} .
\end{equation*}
Using Stirling's approximation we have $m! \ge (m/e)^m$ so that
\begin{equation*}
	\sum^{\infty}_{t=T_0} {(2 h(m,t))^m \over m!} \le \sum^{\infty}_{t=T_0} \left({C_1 8 e^{2} \ln t \over t}\right)^{{m \over 2}} \le \sum^{\infty}_{t=T_0} \left({C_1 8 e^{2} \ln t \over t}\right)^{{3 \over 2}} \le {1 \over 3},
\end{equation*}
where we used twice the definition of $T_0$ and $m \ge 5 \ge 3$. 

Putting things together we have proven that :
\begin{equation*}
	     {\PP( {\cal B}_{T} | \bar{\cal A}) \over  \PP( {\cal B}_{T_0} | \bar{\cal A})}  \ge {1 \over 4} \left(1 -   {(2 \Delta)^m \over m!}   \right)^{T-T_0} .
\end{equation*}
We showed previously with Theorem \ref{th:linear_frequentist} that :
\begin{equation*}
\PP( {\cal B}_{T_0} | \bar{\cal A}) \ge {1 \over 2}(1-p_\Delta)^{T_0-1} 
\end{equation*} 
Let us lower bound the r.h.s. of this inequality. Since $m \ge 5$ and $\Delta \le 1/6$ we have, by definition
$$
    p_{\Delta} = \exp\left\{- {2 m\over 9} \Big[{1 \over 2} - \Big({\Delta \over m} + {1 \over \sqrt{m}}\Big)\Big]^2  \right\} \le \exp\left\{ - \xi m \right\},
$$
with 
$$\xi = {2 \over 9}\left[{1 \over 2} - \left({1 \over 30} + {1 \over \sqrt{5}}\right)\right]^2 > 0.
$$
Using the definition of $T_0$ this yields
$$
{1 \over 2} (1-p_\Delta)^{T_0-1} \ge {1 \over 2}(1 - e^{-\xi m} )^{C_0 m^2 \ln m - 1} \ge  \min_{m \ge 5} \left\{ {1 \over 2}(1 - e^{-\xi m} )^{C_0 m^2 \ln m - 1} \right\} \equiv C_2,
$$
where $C_2$ is a universal constant and $C_2 > 0$ since 
$$
\lim_{m \to \infty}\left\{ {1 \over 2}(1 - e^{-\xi m} )^{C_0 m^2 \ln m - 1} \right\} = {1 \over 2} > 0.
$$
We have proven that: 
\begin{equation*}
\PP( {\cal B}_{T_0} | \bar{\cal A}) \ge {1 \over 2}(1-p_\Delta)^{T_0-1} \ge C_2 .
\end{equation*} 
which gives
\begin{equation*}
\PP( {\cal B}_{T}) \ge C_2 \left(1-{(2\Delta)^m \over m !}\right)^{T-T_0},
\end{equation*}
and applying Remark~\ref{rem:regret_first_time} concludes the proof.

\subsection{Proof of Corollary \ref{co:linear_minimax}}
Using the same notation as above, we recall that
\begin{equation*}
\PP({\cal B}_T) \geq  C_3\left(1-{(2\Delta)^m \over m !}\right)^T.
\end{equation*}
If ${\cal B}_T$ occurs, decision $x^1$ is never played, resulting in a regret of $\Delta T$, therefore:
\begin{equation*}
R(T,\theta) \ge \Delta T \PP({\cal B}_T) \ge C_3 \Delta T\left(1-\frac{\left(2\Delta\right)^m}{m!}\right)^T .
\end{equation*}
(i) If $T \ge 3^m m!$ let us set 
\begin{equation*}
\Delta = \frac{1}{2} \left(\frac{m!}{T}\right)^{\frac{1}{m}} ,
\end{equation*}
so that we have $\Delta \le {1 \over 6}$ and, using Stirling's approximation $m! \ge (m/e)^m$ we get
\begin{align*}
\max_{\theta \in [0,1]^d} R(T,\theta) &\ge  \frac{C_3}{3} \left(m!\right)^{\frac{1}{m}}  T^{1 - \frac{1}{m}}\left(1- {1 \over T}\right)^T \\
&\ge {C_3 \over 3} {m \over e}  (1- e^{-1}) T^{1 - \frac{1}{m}}.
\end{align*}
this yields
\begin{equation*}
\max_{\theta \in [0,1]^d} R(T,\theta) \geq \mathcal{O}(mT^{1-\frac{1}{m}}).
\end{equation*}
(ii) If $T \le 3^m m!$ let us set $\Delta = 1/6$, which yields 
\begin{equation*}
\max_{\theta \in [0,1]^d} R(T,\theta) \geq \mathcal{O}(T).
\end{equation*} 
and completes the proof.

\subsection{Proof of Theorem~\ref{th:linear_frequentist_forced}}
To simplify notation, we assume that the $\ell$ rounds of exploration are done before the algorithm starts, so that at time $t=0$ each decision has been explored $\ell / 2$ times and the TS algorithm starts. 

We consider the following event : 
\begin{equation*}
    {\cal C} = \left\{\forall i \in [d] , A_i(0) = \frac{\ell}{2} \right\}.
\end{equation*}
We know that $A_i(0)$, $i=1,...,d$ are independent with a Binomial$(\ell/2,\theta_i)$ distribution so that
$$
\PP({\cal C}) = \Big(1-{\Delta \over m}\Big)^\frac{\ell m}{2}.
$$
Define $\epsilon = {1 \over \sqrt{m}}$. We consider again the event where the empirical mean of decision $x^2$ deviates significantly from its expectation when it is selected, accounting for the rounds of forced exploration:
\begin{equation*}
    {\cal A} = \left\{ \exists t \ge 0: x(t) = x^2 ,  \sum_{i=m+1}^d A_i(t) \le (1-{\Delta \over m}-\epsilon) (N^2(t)-{\ell \over 2}) m  +  {\ell \over 2} m \right\}.
\end{equation*}
We decompose ${\cal A}$ as $\cup_{n \ge 1} {\cal A}_n$ where
\begin{equation*}
{\cal A}_n = \left\{ \exists t \ge 0: x(t) = x^2, N^2(t) = n+{\ell \over 2}, \sum_{i=m+1}^d A_i(t) \le (1-{\Delta \over m}-\epsilon) n m + {\ell \over 2} m  \right\}
\end{equation*}
Since $\epsilon = {1 \over \sqrt{m}}$, using Hoeffding's inequality we have that :
\begin{equation*}
    \PP({\cal A}|{\cal C}) \le \sum_{n \ge 1} \PP({\cal A}_n|{\cal C}) \le \sum_{n \ge 1} \exp(-2 mn \epsilon^2)   = {\exp(-2 m \epsilon^2) \over 1 - \exp(-2 m \epsilon^2)} \le {1 \over 2}.
\end{equation*}
where we have used the fact that if $N^2(t) = n+{\ell/2}$, then $\sum_{i=m+1}^d A_i(t)$ equals ${\ell \over 2}m$ plus the sum of  $m n$ i.i.d Bernoulli random variables with parameter $1 - {\Delta \over m}$. Let us control the probability that decision $x^1$ is never selected between time $0$ and time $t$, which is the probability of event:
\begin{equation*}
{\cal B}_t = \{ x(s) = x^2 : s=1,...,t \}.
\end{equation*}
Let us assume that ${\cal B}_{t}$ and ${\cal C}$ occurs but ${\cal A}$ does not occur. Since decisions $x^{1}$ and $x^2$ have been selected $\ell/2$ and $\ell/2+t$ times respectively, the probability of selecting $x^2$ is lower bounded by:
\begin{equation*}
    \PP( {\cal B}_{t+1} | {\cal B}_t , \bar{\cal A},{\cal C} ) \ge \PP\Big( \sum_{i=1}^m V_i(t) \le \sum_{i=m+1}^d V_i(t) | {\cal B}_t , \bar{\cal A}, {\cal C} \Big).
\end{equation*}
where $V_1(t),...,V_d(t)$ are independent, distributed in $[0,1]$. For $i=1,...,m$, $V_i(t)$ follows a Beta$(\frac{\ell}{2},1)$ law and has mean $\frac{{\ell \over 2} +1}{{\ell \over 2} + 2}$. For $i=m+1,...,d$, $V_i(t)$ follows a  Beta$(A_i(t) + 1,t+{\ell \over 2} - A_i(t) + 1)$ distribution with mean ${A_i(t) + 1 \over t+{\ell \over 2} + 2}$ so that the expectations verify:
\begin{align*}
    \sum_{i=m+1}^d \EE( V_i(t) |  {\cal B}_t , \bar{\cal A}, {\cal C}) - \sum_{i=1}^m \EE( V_i(t) | {\cal B}_t , \bar{\cal A},{\cal C}) &\ge m {t  (1-{\Delta \over m}-\epsilon) +\frac{\ell}{2} + 1 \over t +\frac{\ell}{2} + 2} - m\frac{{\ell \over 2} +1}{{\ell \over 2} + 2}\\
    &\ge m {(1-{\Delta \over m}-\epsilon) +\frac{\ell}{2} + 1 \over  \frac{\ell}{2} + 3} - m\frac{{\ell \over 2} +1}{{\ell \over 2} + 2}\\
     &= m {\left({1 \over {\ell \over 2}+2}  - ({\Delta \over m}+\epsilon)\right)\over ({\ell \over2}+3)},
\end{align*}
since $\sum_{i=m+1}^d A_i(t) \ge t m (1-{\Delta \over m}-\epsilon) + {m\ell \over 2}$. Recall that $\epsilon = {1 \over \sqrt{m}}$ so that ${1 \over {\ell \over 2}+2}  - ({\Delta \over m}+\epsilon) \ge 0$. Using Hoeffding's inequality:
\begin{align*}
    \PP\Big( \sum_{i=1}^m V_i(t) \ge \sum_{i=m+1}^d V_i(t) |  {\cal B}_t , \bar{\cal A} ,{\cal C}\Big) &= \PP\Big( \sum_{i=1}^m V_i(t) - \sum_{i=m+1}^d V_i(t) \ge 0 \Big) \\
    &\le \exp\left\{- 2 m {\left({1 \over {\ell \over 2}+2} - ({\Delta \over m}+\epsilon)\right)^2\over ({\ell \over2}+3)^2}  \right\} \equiv p_{\Delta}^\ell.
\end{align*}
We have proven that for all $t > 1$:
\begin{equation*}
    \PP( {\cal B}_{t+1} | {\cal B}_t , \bar{\cal A} ) \ge 1 - p_{\Delta}^\ell.
\end{equation*}
so that:
\begin{align*}
    \PP( {\cal B}_{t} ) & \ge \PP( {\cal B}_{t}, \bar{\cal A},{\cal C})\\ 
    &= \PP(\bar{\cal A}|{\cal C})\PP({\cal C}) \PP({\cal B}_{t} | \bar{\cal A},{\cal C}) \\
    &\ge \PP(\bar{\cal A}|{\cal C})\PP({\cal B}_{1}|\bar{\cal A},{\cal C}) (1 - p_{\Delta}^\ell)^{t-1}(1-{\Delta \over m})^{\ell m\over 2} \\
    &= \frac{\PP(\bar{\cal A}|{\cal C})}{2}(1 - p_{\Delta}^\ell)^{t-1}(1-{\Delta \over m})^{\ell m\over 2}.
\end{align*}
Denote by $\tau$ the first time that $x^1$ is selected. If ${\cal B}_t$ occurs then $\tau \ge t$ and using Remark \ref{rem:regret_first_time} yields the lower bound:
\begin{equation*}
    R(T,\theta) \ge \Delta \sum_{t=1}^T \PP(\tau \ge t) \ge \Delta {\PP(\bar{\cal A}|{\cal C}) \over 2}(1-{\Delta \over m})^{\ell m\over 2} \sum_{t=1}^T (1 - p_{\Delta}^\ell)^{t-1}.
\end{equation*}
From above, $\PP({\cal A}|{\cal C}) \le {1 \over 2}$, and we get the announced result:
\begin{equation*}
    R(T,\theta) \ge {\Delta \over 4}(1-{\Delta \over m})^{\ell m\over 2} \sum_{t=1}^T (1 - p_{\Delta}^\ell)^t.
\end{equation*}

\subsection{Proof of Theorem~\ref{th:linear_frequentist_forced_minimax}}

To simplify notation, we assume that the $\ell$ rounds of exploration are done before the algorithm starts, so that at time $t=0$ each decision has been explored $\ell / 2$ times and the TS algorithm starts. 

We consider the following event :
\begin{equation*}
    {\cal C} = \left\{\forall i \in [d] , A_i(0) = \frac{\ell}{2} \right\}.
\end{equation*}
We know that $A_i(0)$, $i=1,...,d$ are independent with a Binomial$(\ell/2,\theta_i)$ distribution so that
$$
\PP({\cal C}) = \Big(1-{\Delta \over m}\Big)^\frac{\ell m}{2}.
$$
We consider the event where the empirical mean of decision $x^2$ deviates significantly from its expectation when it is selected, accounting for the rounds of forced exploration:
\begin{equation*}
  {\cal A} = \left\{ \exists t \ge 0: x(t) = x^2 , \sum_{i=m+1}^d A_i(t) \le (m  - \Delta)\Big(N^2(t) - {\ell \over 2}\Big)   + {\ell m \over 2} - \sqrt{m \ln (2 (N^2(t)-{\ell \over 2}))} \right\}  .
\end{equation*}
We decompose ${\cal A}$ as $\cup_{n \ge 1} {\cal A}_n$ where
\begin{equation*}
{\cal A}_n = \left\{ \exists t \ge 0: x(t) = x^2 ,N^2(t) = n+{\ell \over 2},  \sum_{i=m+1}^d A_i(t) \le (m  - \Delta)n   + {\ell m \over 2} - \sqrt{m \ln (2n)} \right\}.
\end{equation*}
Using Hoeffding's inequality we have that :
\begin{equation*}
    \PP({\cal A}| {\cal C_\ell}) \le \sum_{n \ge 1} \PP({\cal A}_n| {\cal C_\ell}) \le \sum_{n \ge 1} {1 \over (2n)^{2}} =  {\pi^2 \over 24} \le {1 \over 2},
\end{equation*}
where we have used the fact that if $N^2(t) = n+{\ell/2}$, then $\sum_{i=m+1}^d A_i(t)$ equals ${\ell \over 2}m$ plus the sum of  $m n$ i.i.d Bernoulli random variables with parameter $1 - {\Delta \over m}$.
Let us control the probability that decision $x^1$ is never selected between time $0$ and time $t$, which is the probability of event:
\begin{equation*}
{\cal B}_t = \{ x(s) = x^2 : s=1,...,t \} ,
\end{equation*}
We have that:
\begin{align*}
\PP( {\cal B}_{t+1} | {\cal B}_t , \bar{\cal A})  &\ge \PP\left( \sum_{i=1}^m V_i(t) \le \sum_{i=m+1}^d V_i(t) | {\cal B}_t , \bar{\cal A} \right) \ge (1-p_{t,1}) (1- p_{t,2}),
\end{align*}
with
\begin{align*}
p_{t,1} &= \PP\left(  \sum_{i=1}^m V_i(t) \ge  m  - \Delta  - h(m,\ell,t) | {\cal B}_t , \bar{\cal A},{\cal C_\ell} \right), \\
p_{t,2} &= \PP\left(  \sum_{i=m+1}^d V_i(t) \le m  - \Delta - h(m,\ell,t) | {\cal B}_t , \bar{\cal A},{\cal C_\ell} \right), \\
h(m,\ell,t) &= - {m \ell \over 2 t} + \sqrt{m \ln (2 t) \over t} + \sqrt{\frac{m^2}{t+\ell} \ln \left(\frac{e^{1 / 12} m \sqrt{t+\ell}}{ \frac{1}{t^2}\sqrt{2 \pi}}\right)}.
\end{align*}
It is noted that there exists a constant $C_1 \ge 0$ such that 
\begin{equation*}
	h(m,\ell,t) \le \sqrt{ {C_1 m^2 ( \ln m + \ln (t+\ell)) \over t}   }.
\end{equation*}
Let us define $T_0(m,\ell) = C_0 m^2 (\ln m)\ell^{1 \over {1 \over 4} - {1 \over m}}$ with $C_0$ a universal constant such that the following inequalities are true
\begin{itemize}
\item $T_0 \ge \max(m,\ell,7),$
\item $h(m,\ell,t) \le {1 \over 6 \ell},$ for all $t \ge T_0,$
\item $\sum^{+\infty}_{t=T_0} {1 \over t^2} \leq {1 \over 2},$
\item $4 \left( {\ell e \sqrt{2 C_1} \over (T_0-1)^{{1 \over 4} - {1 \over m}}}\right) \le {1 \over 3}.$
\end{itemize}

First consider  $p_{t,2}$, and recall that $\forall i \in [d], \forall t, M_i(t) \triangleq \frac{A_i(t)}{A_i(t) +B_i(t) } = \frac{A_i(t)}{N_i(t) } $ is the mode of $V_i(t)$. If event $\bar{\cal A}$ occurs then 
\begin{equation*}\sum_{i=m+1}^d M_i(t) > m  - \Delta  - \sqrt{m \ln (2 N^2(t)) \over N^2(t)} + \frac{m\ell}{2N^2(t)}. \end{equation*}
So using lemma \ref{lem:beta_tail_bound} and remark \ref{rem:concentration_multi_beta_mode} we have that:
$p_{t,2} \le \frac{1}{t^2}.$

Consider $p_{t,1}$. Since $\Delta < \frac{1}{6 \ell}$ and for $t \ge T_0$ we have $h(m,\ell,t) \le {1 \over 6 \ell}$, hence
\begin{equation*}
m  - \Delta  - h(m,\ell,t) \ge m-{1 \over 3\ell}. 
\end{equation*}

If event ${\cal B}_t$ and ${\cal C}_t$ occurs then $A_i(t) = \frac{\ell}{2}$ and $B_i(t) = 0$ for all $i=1,...,m$ therefore we may control the tail behaviour of $\sum_{i=m+1}^d V_i(t)$ thanks to lemma \ref{lem:betatail}. So we have  for $t \ge T_0$:
\begin{align*}
    p_{t,1} & \leq {\ell^m \over 2^m m!}\left(\Delta  + h(m,\ell,t) \right)^{m} 
    \le{1 \over 2 (m!)} \left(  (\ell \Delta )^{m} + (\ell h(m,\ell,t))^m \right).
\end{align*}
where we used the convexity inequality $({x+y \over 2})^m \le {x^m + y^m \over 2}$. 

We have, for $T > T_0$ : 
\begin{align*}
     {\PP( {\cal B}_{T} | \bar{\cal A},{\cal C}) \over  \PP( {\cal B}_{T_0} | \bar{\cal A},{\cal C})} & =  \prod^{T-1}_{t=T_0} \PP({\cal B}_{t+1}|{\cal B}_{t}, \bar{\cal A},{\cal C})  \geq \prod^{T-1}_{t=T_0} (1-p_{t,1})(1-p_{t,2}).\\
\end{align*}
Using the union bound and the definition of $T_0$:
\begin{align*}
    \prod^{T-1}_{t=T_0} (1-p_{t,2}) \ge 1 - \sum^{T-1}_{t=T_0} p_{t,2} \ge 1 - \sum^{T-1}_{t=T_0} {1 \over t^2} \ge 1 - \sum^{+\infty}_{t=T_0} {1 \over t^2} \ge {1 \over 2}.
\end{align*}
Now:
\begin{align*}
	1 - p_{t,1} &= 1 - {(\ell \Delta)^m \over m!} - {( \ell h(m,\ell,t))^m \over m!} \\
	&= (1 - {(\ell \Delta)^m \over m!}) {1 - {(\ell \Delta)^m \over m!} - {( \ell h(m,\ell,t))^m \over m!} \over 1 - {(\ell \Delta)^m \over m!}} \\
	&\ge (1 - {(\ell \Delta)^m \over m!})(1 - {3 \over 2} {( \ell h(m,\ell,t))^m \over m!} ). 
\end{align*}
where we used the fact that $\Delta \le {1 \over 6}$ so that ${(\ell \Delta)^m \over m!} \le {1 \over 3}.$

Using the union bound once more:
\begin{align*}
	    \prod^{T-1}_{t=T_0} (1-p_{t,2}) &\ge \prod^{T-1}_{t=T_0} (1 - {(\ell \Delta)^m \over m!}) (1 - {3 \over 2} {(\ell h(m,\ell,t))^m \over m!} ) \\
	    &\ge (1 - {(\ell \Delta)^m \over m!})^{T - T_0}  (1 - {3 \over 2} \sum^{T-1}_{t=T_0} {(\ell h(m,\ell,t))^m \over m!} ) \\
	   &\ge (1 - {(\ell \Delta)^m \over m!})^{T - T_0} (1 - {3 \over 2} \sum^{\infty}_{t=T_0} {(\ell h(m,\ell,t))^m \over m!} ) .
\end{align*}
We turn to the last sum in the right hand side of the equation above. It is noted that $\log(2t) \le \sqrt{t}$ for all $t \ge 7$. Since $t \ge T_0 \ge \max(m,\ell,7)$ we have
\begin{equation*}
	h(m,\ell,t) \le \sqrt{C_1 m^2(\ln(t+\ell) + \ln m) \over t} \le \sqrt{C_1 2 m^2 \ln(2t) \over t}
	\le m \sqrt{2 C_1}  \hspace{0.2cm} t^{-{1 \over 4}}.
\end{equation*}

We now upper bound the sum as follows:
\begin{align*}
	\sum^{\infty}_{t=T_0} {(\ell h(m,t))^m \over m!} 
	& \overset{(i)}{\le} {(\ell m \sqrt{2 C_1} )^{m} \over m!}
	\sum^{\infty}_{t=T_0} t^{-{m \over 4}} 
	\overset{(ii)}{\le} (\ell e \sqrt{2 C_1} )^{m}
	\sum^{\infty}_{t=T_0} t^{-{m \over 4}} \\
	& \overset{(iii)}{\le} 4 \left( {\ell e \sqrt{2 C_1} \over (T_0-1)^{{1 \over 4} - {1 \over m}}} \right)^{m} 
	\overset{(iv)}{\le} 4 \left( {\ell e \sqrt{2 C_1} \over (T_0-1)^{{1 \over 4} - {1 \over m}}} \right) \le {1 \over 3},
\end{align*}
where we used (i) the bound above, (ii) Stirling's approximation $m! \ge (m/e)^m$, (iii) the following sum-integral comparison, for any $m \ge 5$:
$$
    \sum_{t=T_0}^{+\infty}  t^{-{m \over 4}} \le \int_{T_0-1}^{+\infty} t^{-{m \over 4}} dt = {(T_0-1)^{1-{m \over 4}} \over {m \over 4} - 1} \le 4 (T_0-1)^{1-{m \over 4}},
$$
and (iv) the definition of $T_0$.

Putting things together we have proven that
\begin{equation*}
	     {\PP( {\cal B}_{T} | \bar{\cal A},{\cal C}) \over \PP( {\cal B}_{T_0} | \bar{\cal A})}  \ge {1 \over 4} \left(1 -   {(\ell \Delta)^m \over m!}   \right)^{T-T_0}.
\end{equation*}
We showed previously that 
\begin{equation*}
\PP( {\cal B}_{T_0} | \bar{\cal A}) \ge {1 \over 2}(1-p_\Delta)^{T_0-1} \ge C_2(\ell,m),
\end{equation*} 
with 
\begin{equation*}
	C_2(\ell,m) = {1 \over 2}(1-p_\Delta^\ell)^{T_0-1},
\end{equation*}
and it is noted that $\lim_{m \to \infty} C_2(\ell,m) = 1$ for any fixed $\ell \ge 0$.

Therefore
\begin{equation*}
	 \PP( {\cal B}_{T} | \bar{\cal A},{\cal C}) \ge {C_2(\ell,m) \over 4}  \left(1 - {(\ell \Delta)^m \over m!}   \right)^{T-T_0}.
\end{equation*}
Since $\PP({\cal C}) = (1 - {\Delta \over m})^{\ell m \over 2}$ we get
\begin{equation*}
	 \PP( {\cal B}_{T} | \bar{\cal A}) \ge {C_2(\ell,m) \over 4}  (1 - {\Delta \over m})^{\ell m \over 2} \left(1 -   {(\ell \Delta)^m \over m!}   \right)^{T-T_0},
\end{equation*}
and applying Remark~\ref{rem:regret_first_time} concludes the first part of the proof.

Now choose:
\begin{equation*}
\Delta =\frac{1}{\ell} \min\left(  \left(\frac{m!}{T}\right)^{\frac{1}{m}} , {1 \over 6}\right),
\end{equation*}
and lower bound the regret by
$$
    \max_{\theta \in [0,1]^d} R(T,\theta) \ge \Delta T \PP( {\cal B}_{T}).
$$
If $\Delta = {1 \over \ell} \left(\frac{m!}{T}\right)^{\frac{1}{m}} $, then replacing
\begin{align*}
\max_{\theta \in [0,1]^d} R(T,\theta) &\ge \Delta T {C_2(\ell,m) \over 4}  \left(1 - {\Delta \over m}\right)^{\ell m \over 2} \left(1 -   {(\ell \Delta)^m \over m!} \right)^{T} \\
&= {(m!)^{1 \over m} T^{1 - {1 \over m}} \over \ell}  {C_2(\ell,m) \over 4}  \left(1 - {\Delta \over m} \right)^{\ell m \over 2} \left(1 -  {1 \over T} \right)^T .
\end{align*}
Using the facts that (i) $\left(1 -  {1 \over T} \right)^T \ge e$, that (ii) $(m!)^{1 \over m} \ge {m \over e}$ which follows from Stirling's approximation $m! \ge ({m \over e})^m$ and that (iii) since $\Delta \le {1 \over 6 \ell}$:
$$
   \left(1 - {\Delta \over m} \right)^{\ell m \over 2} \ge \left(1 - {1 \over 6 m \ell} \right)^{\ell m \over 2} \ge e^{-{1 \over 12}},
$$
which yields the minimax regret bound
\begin{equation*}
\max_{\theta \in [0,1]^d} R(T,\theta) \geq \mathcal{O}\left(C_2(\ell,m) {m\over \ell}T^{1-\frac{1}{m}}\right).
\end{equation*}
Otherwise $\Delta = {1 \over 6 \ell}$ and we simply have 
\begin{equation*}\max_{\theta \in [0,1]^d} R(T,\theta) \geq \mathcal{O}\left(C_2(\ell,m) {T\over \ell}\right).\end{equation*}
which completes the proof.

\subsection{Proof of Theorem~\ref{th:nonlinear_frequentist}}
\label{proof:nonlinear_frequentist}

At round $t \ge 1$, if the optimal decision has never been played then $A_i(t) = B_i(t) = 0$ for $i=1,...,m$. In turn the samples $V_i(t)$ are independent and uniformly distributed in $[0,1]$ for $i=1,...,m$.

Lemma \ref{lem:product_beta} shows that at time $t$ the optimal decision is played with probability 
\begin{equation*}
\PP\left( \prod_{i=1}^m V_i(t) \ge 1 - \Delta\right) \le p_{\Delta} \equiv {1 \over m m !} \left[\ln \left({1 \over 1 - \Delta} \right)\right]^{m}.
\end{equation*}
So the distribution of the first time the optimal decision is played
\begin{equation*}
\tau = \min \{ t \ge 1: x(t) = x^\star\},
\end{equation*}
is lower bounded by a geometric law
\begin{equation*}
    \PP( \tau \ge t) \ge (1-p_{\Delta})^{t-1}.
\end{equation*}
Combining this with Remark~\ref{rem:regret_first_time} yields the announced regret bound
\begin{equation*}
        R(T,\theta) \ge \Delta \sum_{t=1}^T  \PP(\tau \ge t) \ge \sum_{t=1}^T  (1-p_{\Delta})^{t-1} = \frac{\Delta}{p_\Delta}(1- (1-p_\Delta)^T).
\end{equation*}

\subsection{Proof of Theorem~\ref{th:nonlinear_frequentist_forced}}
\label{proof:nonlinear_frequentist_forced}

At round $t \ge 1$, if the optimal decision has been played ${\ell \over 2}$ times then $A_i(t) = {\ell \over 2}$ and $B_i(t) = 1$ for $i=1,...,m$. In turn the samples $V_i(t)$ are independent  with distribution  $\text{Beta}(1 + {\ell \over 2},1)$ for $i=1,...,m$.

Lemma \ref{lem:product_beta} shows that at time $t$ the optimal decision is played with probability 
\begin{equation*}
\PP( \prod_{i=1}^m V_i(t) \ge 1 - \Delta) \le p_{\Delta}^\ell \equiv {1 \over m m !} \left[\left(1+{\ell \over 2}\right) \ln \left({1 \over 1 - \Delta} \right)\right]^{m}.
\end{equation*}
So the distribution of the first time the optimal decision is played
\begin{equation*}
\tau = \min \{ t \ge 1: x(t) = x^\star\},
\end{equation*}
is lower bounded by a geometric law
\begin{equation*}
    \PP( \tau \ge t) \ge (1-p_{\Delta}^\ell)^{t-1}.
\end{equation*}
Combining this with remark~\ref{rem:regret_first_time} yields the announced regret bound
\begin{equation*}
        R(T,\theta) \ge \Delta \sum_{t=1}^T  \PP(\tau \ge t) \ge \sum_{t=1}^T  (1-p_{\Delta}^\ell)^{t-1} = \frac{\Delta}{p_{\Delta}^\ell}(1- (1-p_{\Delta}^\ell)^T).
\end{equation*}
\end{document}